\documentclass{article} 

\usepackage[main, preprint]{neurips_2026}

\usepackage[utf8]{inputenc} 
\usepackage[T1]{fontenc}    
\usepackage{hyperref}       
\usepackage{url}            
\usepackage{booktabs}       
\usepackage{amsfonts}       
\usepackage{nicefrac}       
\usepackage{microtype}      
\usepackage{xcolor}         
\usepackage{wrapfig}
\usepackage{subfig}
\usepackage{siunitx}
\usepackage{enumitem}

\usepackage{amsmath}
\usepackage{amssymb}
\usepackage{mathtools}
\usepackage{amsthm}

\theoremstyle{plain}
\newtheorem{theorem}{Theorem}[section]
\newtheorem{proposition}[theorem]{Proposition}

\newtheorem{corollary}[theorem]{Corollary}
\theoremstyle{definition}
\newtheorem{definition}[theorem]{Definition}

\theoremstyle{remark}

\usepackage{algorithm}
\usepackage{algorithmic}

\title{Flow-Based Conformal Predictive Distributions}

\author{%
  Trevor A. Harris \\
  Department of Statistics \\
  University of Connecticut\\
  Storrs, CT 06269 \\
  \texttt{trevor.a.harris@uconn.edu} \\
}

\begin{document}
\maketitle

\begin{abstract}


Conformal prediction provides a distribution-free framework for uncertainty quantification via prediction sets with exact finite-sample coverage. In low dimensions these sets are easy to interpret, but in high-dimensional or structured output spaces they are difficult to represent and use, which can limit their ability to integrate with downstream tasks such as sampling and probabilistic forecasting. We show that any sufficiently regular differentiable nonconformity score induces a deterministic flow on the output space whose trajectories converge to the boundary of the corresponding conformal prediction set. This leads to a computationally efficient, training-free method for sampling conformal boundaries in arbitrary dimensions. Mixing across confidence levels yields conformal predictive distributions whose quantile regions coincide with the empirical conformal prediction sets. We provide an approximation bound decomposing CPD predictive error into score-induced distortion, base-measure quality, and gradient flow-induced distortion. We evaluate the approach on PDE inverse problems, precipitation downscaling, climate model debiasing, and hurricane trajectory forecasting.


\end{abstract}

\section{Introduction}

Conformal prediction is a general, distribution-free framework for post-hoc uncertainty quantification (UQ) \citep{vovk2005algorithmic, shafer2008tutorial, lei2018distribution}. Given a predictive model and a nonconformity score, conformal methods return prediction sets that are guaranteed to contain the true outcome with a prescribed $1 - \alpha$ probability. This guarantee holds even in finite samples without any distributional assumptions or restrictions on the structure of the predictive model. Because conformal methods are so widely applicable and provide rigorous statistical guarantees they have become enormously popular in modern machine learning settings \citep{angelopoulos2023conformal}, which increasingly rely on large complex models that lack native UQ mechanisms.

Both the strength and fundamental limitation of conformal methods is that they return prediction sets, rather than predictive distributions. Given a confidence level $\alpha \in (0, 1)$ and an input covariate $x \in \mathcal{X}$, a conformal algorithm will only return a label set $C_\alpha(x) \subset \mathcal{Y}$. This set is defined implicitly as a sublevel set of the chosen nonconformity score $S:\mathcal{X} \times \mathcal{Y} \to \mathbb{R}$, i.e. $C_\alpha(x) = \{y \in\mathcal{Y} : S(x, y)\le \tau_\alpha \}$ where $\tau_\alpha$ is an empirical threshold that guarantees $P(y \in C_\alpha(x)) \geq 1 - \alpha$. Pure set-wise UQ is advantageous because it is completely distribution free as long as the $(x, y)$ process is exchangeable. However, this also poses two significant issues. The first is how to even represent and apply these sets when the space $\mathcal{Y}$ is high-dimensional or a complex, structured manifold (Figure \ref{fig:motivation}). The second is how to compose prediction sets with downstream statistical tasks such as probabilistic forecasting, risk estimation, and simulation.

In a univariate space $\mathcal{Y} \subseteq \mathbb{R}$, conformal sets are representable as intervals, or collections of intervals, due to the natural ordering of $\mathbb{R}$. However, for any higher-dimensional space, which is not naturally ordered, there is no canonical way to represent $C_\alpha(x) = \{y \in\mathcal{Y} : S(x, y)\le \tau_\alpha \}$ unless the score has a very simple geometry (e.g. $\ell_2$ norms induce spherical level sets). Without a usable representation, the set $C_\alpha(x)$ cannot provide any future predictive uncertainty beyond membership queries. There are several attempts to provide a representation including prediction bands \citep{diquigiovanni2022conformal, ajroldi2023conformal}, generative modeling and transport maps \citep{wang2022probabilistic, ndiaye2025beyond}, and conformal ensembles \citep{harris2024quantifying, harris2025locally}. However, these approaches suffer from data limitation issues (ensembles) and heavy computation (OT maps, generative models), or greatly simplifying assumptions (bands). In domains such as geospatial modeling \citep{mao2024valid, harris2024quantifying}, image and video prediction \citep{ajroldi2023conformal, wen2025conformal, adame2025image}, operator and PDE modeling \citep{diquigiovanni2022conformal, ma2024calibrated, mollaali2024conformalized}, language modeling \citep{quach2023conformal, campos2024conformal, cherian2024large}, and more \citep{chernozhukov2021distributional, teneggi2023trust}, conformal prediction sets can become very challenging to represent, even with generative models, due to their complex structure and, often, limited calibration data.


\begin{figure}
\centering
\includegraphics[width=0.99\textwidth]{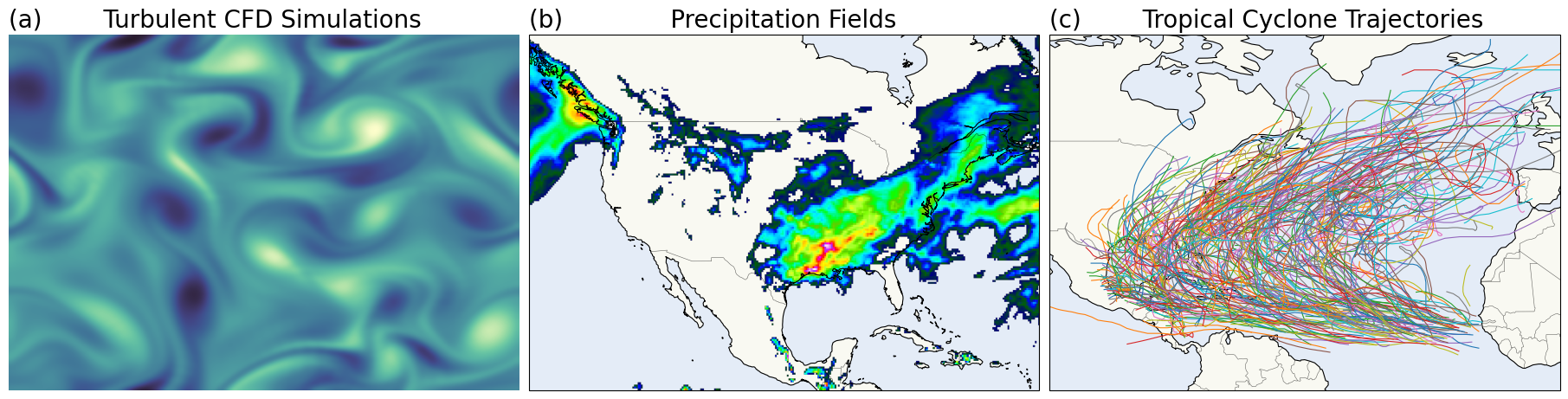}
\caption{What do prediction sets look like for CFD simulations? Precipitation patterns? Tropical Cyclones? How can we take implicitly defined sets and make them useful in these spaces?}
\label{fig:motivation}
\end{figure}

Even if a representation can be constructed, applying prediction sets to probabilistic tasks is still problematic. Common modeling tasks, such as probabilistic forecasting and risk estimation, intrinsically require interaction with a distribution to calculate probabilities and compute expectations. If the outcomes are univariate, $\mathcal{Y} \subseteq \mathbb{R}$, then conformal predictive systems \citep{vovk2017nonparametric, vovk2018cross, vovk2019universally} provide an analytically tractable approach for lifting prediction sets into well calibrated predictive distributions. However, these approaches do not immediately extend to higher dimensional spaces and manifolds without a natural ordering. Recent approaches extend these ideas to higher-dimensional spaces through generative modeling \citep{wang2022probabilistic, teneggi2023trust, zheng2024generative} or optimal transport (OT) \citep{ndiaye2025beyond}. These approaches are computationally heavy, introduce structure, and explicit OT maps scale poorly with dimension \citep{cuturi2013sinkhorn}.

In this work, we propose a constructive alternative based on flows \citep{robinson1998dynamical}. We show that any well-behaved differentiable nonconformity score can induce a deterministic dynamical system on $\mathcal{Y}$ whose trajectories globally attract to the conformal set boundary. The resulting flow, called a nonconformity flow, is uniquely determined up to tangential motion by the gradient of the nonconformity score, and, therefore, requires no additional training, auxiliary modeling, or transport mapping. We show that, for almost any initialization, the flow trajectories converge exponentially fast to the conformal boundaries in arbitrary dimensions under mild regularity conditions.
Therefore, nonconformity flows provide a training free, computationally efficient, and scalable mechanism for sampling conformal prediction set boundaries.

Furthermore, by mixing across confidence levels we can induce calibrated predictive distributions, called conformal predictive distributions (CPDs), for which every empirical conformal prediction set $C_\alpha(x)$ is an exact quantile region. CPDs are a scalable Monte Carlo alternative to analytical conformal predictive systems \citep{vovk2017nonparametric} and optimal transport maps \citep{ndiaye2025beyond} for lifting conformal prediction from sets to predictive distributions. 
%
We evaluate nonconformity flows and CPDs on a range of structured regression tasks, including elliptic PDE inverse problems, precipitation downscaling, climate model debiasing, and trajectory forecasting (Section \ref{sec:experiments}). We show that nonconformity flows quickly and reliably converge across a range of data modalities and score functions. 
The resulting CPDs are shown to be competitive with baseline uncertainty quantification (UQ) methods and can even act as controllable rare-event simulators (Figure \ref{fig:hurr}).


\section{Background} \label{sec:background}


Let $f_\theta: \mathcal{X} \to \mathcal{Y}$ denote a regression algorithm from the space $\mathcal{X}$ to an output space $\mathcal{Y}$, parameterized by $\theta \in \Theta$, and let $x \in \mathcal{X}$ and $y \in \mathcal{Y}$ denote specific covariates and targets, respectively. We allow $\mathcal{Y}$ to be infinite-dimensional (e.g., functions, fields, or operators), but assume that the method operates on a finite-dimensional differentiable representation of $\mathcal{Y}$ (e.g., via discretization or basis expansion), so that gradients of the nonconformity score and dynamical systems on the representation space are well-defined. We denote $\mathcal{D}_{train} = \{(x_s, y_s)\}_{s = 1}^m$ as the training dataset and $\mathcal{D}_{cal} = \{(x_t, y_t)\}_{t = 1}^n$ as a disjoint calibration dataset. All regression pairs $(x_t, y_t)$ are assumed to be exchangeable.

\subsection{Conformal Prediction} \label{sec:conformal_prediction}


Given a predictive model $f_\theta: \mathcal{X} \to \mathcal{Y}$ and a miscoverage level $\alpha \in (0, 1)$, conformal methods construct sets $C_\alpha(x) \subset \mathcal{Y}$ such that 
\begin{equation}\label{eqn:coverage_guarantee}
    \mathbb{P} \big(y  \in C_\alpha(x) \big)\ \ge\ 1-\alpha,
\end{equation}
where $(x, y)$ is new input that is exchangeable with the data in $\mathcal{D}_{cal}$. Remarkably, the validity criterion (Equation \ref{eqn:coverage_guarantee}) requires no assumptions about the model $f_\theta$ itself, and is obtained purely through an exchangeability assumption \citep{shafer2008tutorial}. Given the model and the calibration data, constructing $C_\alpha(x)$ is also straightforward and computationally efficient.

In the standard inductive, or split, conformal prediction we choose a positively oriented scoring function $S:\mathcal{X} \times \mathcal{Y} \to \mathbb{R}$, such as $S(x, y) = || y - f_\theta(x)||_2$, which measures the nonconformity of $f_\theta(x)$ with the target $y$. We then compute the nonconformity scores of all calibration data to obtain the sequence $S_1 = S(x_1, y_1), \dots, S_n = S(x_n, y_n)$. Let $S_{(1)}\le \cdots \le S_{(n)}$ be the order statistics of the calibration scores. The conformal threshold is defined as $\tau_\alpha = S_{(k)}$ where $k=\lceil (1-\alpha)(n+1)\rceil$. Given the $\tau_\alpha$, we can define the conformal prediction set 
\begin{equation}\label{eq:split-conformal-set}
    C_\alpha(x) \;=\; \big\{\, y \in\mathcal{Y}:\ S(x, y)\le \tau_\alpha\,\big\},
\end{equation}
which is guaranteed to meet the validity criterion (Equation \ref{eqn:coverage_guarantee}) when the data are fully exchangeable. For our method, we do not restrict the nonconformity score $S(x, y)$, except that its gradient with respect to $y$, $\nabla S(x, y)$, exists almost everywhere for all $y \in \mathcal{Y}$, and that $C_\alpha(x)$ is non-empty, i.e. $\inf_y S(x, y) \leq \tau_\alpha$. This holds for all standard residual-based scores since $S(x, f_\theta(x)) = 0 < \tau_\alpha$. We generally suppress $x$ and write $S(y)$.

\subsection{Dynamical Systems and Flows} \label{sec:dynamical_systems}



A dynamical system is a rule for deterministically evolving states forward in time \citep{robinson1998dynamical}. A continuous time dynamical system is defined as a pair $(\mathcal{Y}, \Phi)$ where $\Phi : \mathbb{R} \times \mathcal{Y} \to  \mathcal{Y}$ is a continuous flow. Continuous flows are functions satisfying $\Phi(0, y) = y ;\ \forall y \in \mathcal{Y}$, $\Phi(t + s, y) = \Phi(t, \Phi(s, y))$, and $\Phi(\cdot)$ continuous.
We can induce a dynamical system $(\mathcal{Y}, \Phi)$ on the state space $\mathcal{Y}$ through an ordinary differential equation (ODE) defined by a vector field $v : \mathcal{Y} \to \mathcal{Y}$ as:
\begin{equation} \label{eqn:flow_ode}
	 y'(t) = v(y(t)),
\end{equation}
where $y(t)$ denotes the system state at time $t \in \mathbb{R}$ and $y'(t)$ its time derivative. The induced flow, $\Phi(t, y_0)$, is the map computing the solution of the ODE (Equation \ref{eqn:flow_ode}) at time $t$, i.e. 
\begin{equation} \label{eqn:flow}
	\Phi(t, y_0) = y(t; y_0) = y_0 + \int_0^t v(y(s))ds,
\end{equation}
where $y_0 = y(0)$ is the initial condition. Flows are widely used for scientific modeling  \citep{robinson2012introduction, hirsch2013differential} and, more recently, generative modeling \citep{dinh2016density, chen2018neural, liu2022flow, lipman2023flow}.

An important concept in dynamical systems is attraction. An attractor is a region of $\mathcal{Y}$ that $\Phi(t, \cdot)$ approaches as $t \rightarrow \infty$ and then never leaves. More formally, an attractor of $\Phi(t, \cdot)$ is defined as a subset $A \subset \mathcal{Y}$ such that $\Phi(t, A) = A$ for all $t \geq 0$ (invariance) and $\lim_{t \rightarrow \infty} \text{dist}(\Phi(t, U),  A) = 0$ (attraction) where the open subset $U \subset \mathcal{Y}$ is the basin of attraction. 

\section{Method} \label{sec:method}

As in Section \ref{sec:background}, let $f_\theta : \mathcal{X} \to \mathcal{Y}$ denote our regression algorithm, $\mathcal{D}_{train} = \{(x_s, y_s)\}_{s = 1}^m$ the training data, $\mathcal{D}_{cal} = \{(x_t, y_t)\}_{t = 1}^n$ the calibration data, and $S: \mathcal{X} \times \mathcal{Y} \to \mathbb{R}$ a nonconformity score whose gradient at $y \in \mathcal{Y}$, denoted $\nabla S(x, y)$, exists almost everywhere. We assume that all pairs $(x_t, y_t)$ are exchangeable. 
For a target miscoverage level $\alpha \in (0,1)$, let $\tau_\alpha$ denote the split conformal threshold and define the conformal prediction set as 
$C_\alpha(x) = \{y \in \mathcal{Y} : S(x, y) \leq \tau_\alpha \}$ and its boundary 
as $\partial C_\alpha(x) = \{y \in \mathcal{Y} : S(x, y) = \tau_\alpha \}$.

\subsection{Sampling conformal boundaries} \label{sec:levelset_sampling}

First, we introduce nonconformity flows to efficiently sample from the conformal boundaries $\partial C_\alpha(x) = \{y \in \mathcal{Y} : S(x, y) = \tau_\alpha \}$ in arbitrary dimensions given a differentiable score.

\begin{figure} 
\centering
\includegraphics[width=0.99\textwidth]{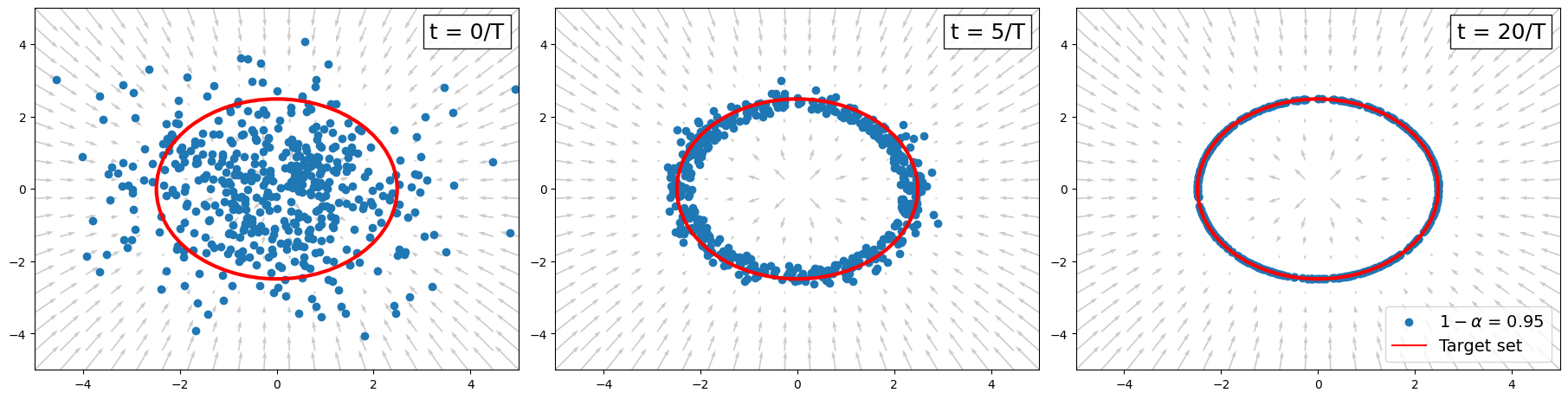}
\caption{The nonconformity flow (arrows) globally attracts towards the target level set ($\ell_2$ score).}
\label{fig:flowmap}
\end{figure}

\paragraph{Nonconformity flows} We define a continuous time dynamical system, $(\mathcal{Y}, \Phi)$, on $\mathcal{Y}$ through the following ODE system:
\begin{align} \label{eqn:score_flow}
   y'(t)&= v(y(t)) \\
   S'(y(t)) &= -\lambda(S(y(t)) - \tau_\alpha)
\end{align}
where $y(t) \in \mathcal{Y}$ is a trajectory in $\mathcal{Y}$ and $S(y(t)) = S(f_\theta(x), y(t))$ is the score of state $y(t)$ with $f_\theta(x) = \hat y \in \mathcal{Y}$ held fixed. As in Section \ref{sec:dynamical_systems}, $v: \mathcal{Y} \times \mathbb{R} \to \mathcal{Y}$ is a vector field that governs the induced flow.
The second equation, however, is a linear ODE on the score that controls the behavior of $v(\cdot)$, which we denote as the score controller. Among all vector fields satisfying the score controller, we select the minimum norm solution at each time $t$. Whenever $\nabla S(y(t)) \neq 0$, this yields an explicit vector field
\begin{equation} \label{eqn:frontier_velocity}
    v_\alpha(y(t)) = -\lambda(S(y(t))- \tau_\alpha) \frac{\nabla S(y(t))}{\Vert \nabla S(y(t)) \Vert^2_2},
\end{equation}
where $\Vert \cdot \Vert_2^2$ is the squared Euclidean norm on $\mathcal{Y}$. This field is derived in Appendix \ref{apx:sec:derivation}. Importantly, $v_\alpha(y(t))$ is computable directly from the score and does not require training, modeling, or an OT map. The minimum norm solution eliminates tangential motion on the induced flow
\begin{equation} \label{eqn:flow_sampling}
	\Phi_\alpha(t, y_0)  = y_0 + \int_0^t v_\alpha(y(s))ds,
\end{equation}
meaning trajectories efficiently follow the gradient field lines directly to the target level set. We say this flow is score controlling since points $y(t)$ with scores $S(y(t)) > \tau_\alpha$ are pushed inwards, while points such that $S(y(t)) < \tau_\alpha$ are pushed outwards to the target level set $\partial C_\alpha(x)$ (Figure \ref{fig:flowmap}). Once $S(y(t)) = \tau_\alpha$ then $v_\alpha(\cdot)$ vanishes and motion stops. The free parameter $\lambda > 0$ controls the flow velocity and its value can be set automatically to achieve $\epsilon$-convergence in finite time (Corollary \ref{prop:hitting_time}).

\paragraph{Convergence} We show that the induced flow $\Phi_\alpha(t, y_0)$ converges globally and exponentially fast to the boundary set $\partial C_\alpha(x)$. Moreover we upper bound the pointwise convergence rate of a trajectory $y(t) \in \mathcal{Y}$ to its unique limit point in $\partial C_\alpha(x)$ and show that convergence is dimension free, aside from the higher cost of evaluating $\nabla S(y(t))$. Proposition \ref{prop:convergence} makes this exact.

\begin{proposition}[Convergence]\label{prop:convergence}
Define the score error $\varepsilon(t):=S(y(t))-\tau_\alpha$. Assume $y(t)$ exists for all $t \geq 0$ and $\nabla S(y(t)) \neq 0$ whenever $S(y(t)) \neq \tau_\alpha$. Then:
\begin{enumerate}
	\item Score convergence:
	$\varepsilon'(t)=-\lambda\,\varepsilon(t)$, hence
	$\varepsilon(t)=\varepsilon(0)\,e^{-\lambda t}$
	and
	$S(y(t))\to\tau_\alpha \ \text{as } t\to\infty$.
	In particular, every $\omega$-limit point of $(y(t))_{t\ge 0}$ lies in $\partial C_\alpha(x)$.

	\item Pointwise convergence:
	Assume, in addition, that there exist $T<\infty$ and $m>0$ such that $ \|\nabla S(y(t))\|_2 \ge m$ for all $t\ge T$, then $y(t)$ converges to a unique limit point $y_\infty \in \partial C_\alpha(x)$ at the exponential rate
	\[
		\|y(t)-y_\infty\|_2
		\le
		\frac{1}{m}\,|S(y_0)-\tau_\alpha|\,e^{-\lambda t}
		\qquad\text{for all } t\ge T.
	\]
\end{enumerate}
\end{proposition}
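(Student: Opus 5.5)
For part 1, I will differentiate the score along the trajectory with the chain rule. Since $y(t)$ solves $y'=v_\alpha(y)$ with the explicit field \eqref{eqn:frontier_velocity}, at any time with $\nabla S(y(t))\neq 0$ we get
\[
\varepsilon'(t)=\langle \nabla S(y(t)), y'(t)\rangle = -\lambda\,\varepsilon(t)\,\frac{\langle \nabla S(y(t)),\nabla S(y(t))\rangle}{\|\nabla S(y(t))\|_2^2}=-\lambda\,\varepsilon(t).
\]
At times with $S(y(t))=\tau_\alpha$ the field vanishes, so the trajectory stays put and $\varepsilon\equiv 0$ from then on, which is consistent with the same linear ODE. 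The hypothesis that $\nabla S\neq 0$ off the level set means these two cases cover every time. Uniqueness for the scalar linear ODE then gives $\varepsilon(t)=\varepsilon(0)e^{-\lambda t}$; in particular $\varepsilon$ cannot reach $0$ in finite time unless $\varepsilon(0)=0$. Hence $S(y(t))\to\tau_\alpha$. For the $\omega$-limit statement, take $y(t_k)\to y^*$ and use continuity of $S$ to get $S(y^*)=\lim S(y(t_k))=\tau_\alpha$, i.e.\ $y^*\in\partial C_\alpha(x)$. Continuity of $S$ is implicit here, and differentiability along the trajectory is needed for the chain rule; I will state this as part of the standing regularity assumptions.

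For part 2, I will bound the speed directly. From \eqref{eqn:frontier_velocity}, $\|y'(t)\|_2=\lambda|\varepsilon(t)|/\|\nabla S(y(t))\|_2$. For $t\ge T$ this is at most $\lambda|\varepsilon(0)|e^{-\lambda t}/m$. This speed is integrable on $[T,\infty)$, so $y(t)$ is Cauchy: for $T\le t<s$,
\[
\|y(s)-y(t)\|_2\le\int_t^s\|y'(u)\|_2\,du\le \frac{|\varepsilon(0)|}{m}\bigl(e^{-\lambda t}-e^{-\lambda s}\bigr).
\]
The limit $y_\infty$ therefore exists and is unique. It lies in $\partial C_\alpha(x)$ by part 1, since it is the only $\omega$-limit point. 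Letting $s\to\infty$ gives $\|y(t)-y_\infty\|_2\le \frac{1}{m}|S(y_0)-\tau_\alpha|e^{-\lambda t}$, using $\varepsilon(0)=S(y_0)-\tau_\alpha$.

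The only delicate step is the chain-rule identity in part 1, since $\nabla S$ is only assumed to exist almost everywhere. I will handle it by assuming, as the statement implicitly does, that $t\mapsto S(y(t))$ is absolutely continuous with derivative $\langle\nabla S,y'\rangle$ for a.e.\ $t$. The linear ODE then holds a.e., and an absolutely continuous solution of an a.e.\ linear ODE is still $\varepsilon(0)e^{-\lambda t}$ (apply the argument to $e^{\lambda t}\varepsilon(t)$, whose derivative is zero a.e.). Everything else is routine.
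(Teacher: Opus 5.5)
Your proof is correct and follows the paper's argument essentially step for step. The chain rule gives $\varepsilon'=-\lambda\varepsilon$, continuity of $S$ handles the $\omega$-limit claim, and the speed bound $\|y'(t)\|_2\le(\lambda/m)|\varepsilon(0)|e^{-\lambda t}$ for $t\ge T$ yields finite arc length, a Cauchy trajectory, and the stated rate via the tail integral. Your extra care only tightens points the paper leaves implicit by tacitly taking $S\in C^1$: you invoke uniqueness for the scalar linear ODE, treat times on the level set separately, and state the a.e.\ chain-rule assumption explicitly.
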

Proof in Appendix \ref{apx:sec:proofs}. Proposition \ref{prop:convergence} states that, under mild regularity, for almost any base sample $y_0 \in \mathcal{Y}$, the induced flow will attract to the target level set so that terminal samples will always be elements of $\partial C_\alpha(x)$. This property is completely induced by the controller equation. In fact, because the score controller is a linear ODE, by part 1 in Proposition \ref{prop:convergence}, we get the following corollary describing the finite $\varepsilon$-hitting time of the flow.
\begin{corollary}[$\varepsilon$-hitting time] \label{prop:hitting_time}
For $\varepsilon>0$ and a base sample $y_0$, define the (first) $\varepsilon$-hitting time
$T_\varepsilon(y_0) = \inf \{t \ge 0:\ |S(\Phi_\alpha(t, y_0))-\tau_\alpha|\le \varepsilon \}$.
The $\varepsilon$-hitting time of $y_0$ under $v_\alpha(\cdot)$ is
\begin{equation}
T_\varepsilon(y_0) = (1/\lambda) \log \big[ \vert S(y(0)) - \tau_\alpha \vert / \varepsilon \big]
\end{equation}
\end{corollary}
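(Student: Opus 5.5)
The plan is to read the hitting time off the closed-form score error from part 1 of Proposition \ref{prop:convergence}. Under the standing assumptions of that proposition (global existence of the trajectory and $\nabla S(y(t))\neq 0$ whenever $S(y(t))\neq\tau_\alpha$), the error $\varepsilon(t)=S(\Phi_\alpha(t,y_0))-\tau_\alpha$ satisfies
\[
\varepsilon(t)=\varepsilon(0)e^{-\lambda t}.
\]
Taking absolute values, $|S(\Phi_\alpha(t,y_0))-\tau_\alpha| = |S(y_0)-\tau_\alpha|\,e^{-\lambda t}$. Since $\lambda>0$, this is a continuous, strictly decreasing function of $t$ whenever $S(y_0)\neq\tau_\alpha$. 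The set $\{t\ge 0: |S(\Phi_\alpha(t,y_0))-\tau_\alpha|\le\varepsilon\}$ is therefore a closed half-line $[t^*,\infty)$, and its infimum is $t^*$ itself.

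Next I solve $|S(y_0)-\tau_\alpha|e^{-\lambda t}\le\varepsilon$ for $t$. This is equivalent to
\[
t\ge \tfrac{1}{\lambda}\log\bigl(|S(y_0)-\tau_\alpha|/\varepsilon\bigr),
\]
which gives the stated $T_\varepsilon(y_0)$ as the infimum.

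Finally I handle the edge cases. If $|S(y_0)-\tau_\alpha|\le\varepsilon$, the logarithm is nonpositive and the infimum over $t\ge0$ is $0$. The formula should then be read as $\max\{0,\cdot\}$, or the statement restricted to $|S(y_0)-\tau_\alpha|>\varepsilon$; I will note this explicitly. If $S(y_0)=\tau_\alpha$ the flow is stationary and $T_\varepsilon=0$.

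The only real subtlety is that the exponential identity depends on the trajectory existing and on $\nabla S\neq 0$ along it. I will simply invoke these as the hypotheses of Proposition \ref{prop:convergence}, so no new obstacle arises beyond being careful about the clipping at zero.
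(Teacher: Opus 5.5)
Your proposal is correct and follows the paper's own reasoning: the corollary is read off directly from the closed-form decay $|S(\Phi_\alpha(t,y_0))-\tau_\alpha| = |S(y_0)-\tau_\alpha|e^{-\lambda t}$ in part 1 of Proposition~\ref{prop:convergence}, solved for $t$. Your caveat is also right and fills something the statement leaves out: the formula must be read as $\max\{0,\cdot\}$, with $T_\varepsilon=0$ whenever $|S(y_0)-\tau_\alpha|\le\varepsilon$, which matches the paper's own clipped choice of $\lambda$.
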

The $\varepsilon$-hitting time tells us how long it will take for the sample to be within $\varepsilon \geq 0$ of the target level set. In practice, we can use this to set the velocity scalar $\lambda$ to achieve near convergence in a fixed amount of time, e.g. $t = 1$ by taking $\lambda = \max\!\big(\log (|S(y(0)) - \tau_\alpha| / \varepsilon),\, 0\big)$ with a predefined tolerance $\varepsilon > 0$; points already within tolerance are left unchanged. We use this rule to set $\lambda$ in all numerical experiments (Section \ref{sec:experiments}) using $t = 1$ and $\varepsilon=10^{-6}$.

\subsection{Conformal Predictive Distributions}\label{sec:cpds}

The defining feature of conformal methods is that they produce prediction sets.
This set-valued output enables distribution-free validity without assuming a likelihood or parametric model (Section \ref{sec:background}), but it does not directly provide a probabilistic description of uncertainty. In many downstream tasks, such as probabilistic forecasting, forward simulation, and risk assessment, sets indexed by confidence levels are insufficient, and a single predictive distribution is typically required.

This motivates the problem of lifting conformal prediction from sets to distributions. Conformal methods can provide partial information about a predictive distribution through sequences of prediction sets $\{C_\alpha(X)\}_{\alpha \in (0, 1)}$ where $C_{\alpha_1} \subseteq C_{\alpha_2}$ for $\alpha_1 \geq \alpha_2$,
though they impose no additional constraints beyond this filtration. As a result, conformal prediction cannot identify a unique predictive measure, but rather a family of distributions all compatible with the filtration. We formalize this notion of compatible conformal distributions and provide a constructive algorithm to sample from them.

Fix $x \in \mathcal{X}$ and let $\{C_\alpha(X)\}_{\alpha \in (0, 1)}$ denote the conformal filtration induced by score $S : \mathcal{X} \times \mathcal{Y} \to \mathbb{R}$. 
\begin{definition}[Admissibility]
Let $\pi$ be a probability measure on $(0,1)$, and let
$\mathcal A_{\mathrm{conf}}=\{\alpha_k=1-k/(n+1):k=1,\ldots,n\}$
denote the empirical conformal grid. We say that a predictive distribution
$P_x$ is $\pi$-admissible with respect to $S(\cdot)$ if, for all
$\alpha\in(0,1)$,
\[
P_x(C_\alpha(x)) \geq \pi([\alpha,1)),
\]
and, for all $\alpha_k\in\mathcal A_{\mathrm{conf}}$,
\[
P_x(C_{\alpha_k}(x)) = \pi([\alpha_k,1)).
\]
\end{definition}
$\pi$-Admissibility formalizes what it means for a distribution to be consistent with the conformal filtration. A natural choice for $\pi$ is $\text{Unif}(0, 1)$, in which case $P_x(C_\alpha(x)) = 1-\alpha$ on every empirical conformal set, i.e. each observed $C_\alpha(x)$ is an exact quantile region of $P_x$. Non-uniform $\pi$ measures can still be useful for targeted sampling, e.g. rare event sampling (Section \ref{sec:distribution_quality}).

We can construct $\pi$-admissible predictive measures $P_x$ by mixing over confidence levels. For each $\alpha \in (0, 1)$, 
let $\nu_{x, \alpha}$ be a probability measure supported on $\partial C_\alpha(x)$. We assume that for all measurable $A \subseteq \mathcal{Y}$, the map $\alpha \mapsto \nu_{x, \alpha}$ is also measurable. Given a mixing measure $\pi$ on $(0, 1)$, we define a conformal predictive distribution (CPD) by
\begin{equation} \label{eqn:conformal_distribution}
    P_x^{\mathrm{CPD}}(A) = \int_0^1 \nu_{x, \alpha}(A)d\pi(\alpha), \qquad A \in \mathcal{B}(\mathcal{Y}),
\end{equation}
This construction does not require any likelihood model, transport map, or parametric assumptions. It is based purely on the conformal score and the choice of mixing measure $\pi$.
Furthermore, by design, all CPDs are admissible regardless of the chosen boundary measure.
\begin{proposition}[Calibration]\label{prop:cpd-calibration}
Let $P_x^{\mathrm{CPD}}$ denote a CPD constructed from boundary measures
$\{\nu_{x,a}\}_{a\in(0,1)}$ and mixing measure $\pi$. Then, for every
$\alpha\in(0,1)$,
$
P_x^{\mathrm{CPD}}(C_\alpha(x))
=
\pi\{a:\tau_a\le \tau_\alpha\}
\ge
\pi([\alpha,1)).
$
Moreover, for every empirical conformal level
$\alpha_k\in\mathcal A_{\mathrm{conf}}$,
\[
P_x^{\mathrm{CPD}}(C_{\alpha_k}(x))
=
\pi([\alpha_k,1)).
\]
In particular, if $\pi=\mathrm{Unif}(0,1)$, then $P_x^{\mathrm{CPD}}(C_\alpha(x))\ge 1-\alpha$ for all $\alpha$, with equality on the empirical conformal grid $\mathcal A_{\mathrm{conf}}$.
\end{proposition}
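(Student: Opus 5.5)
The plan is to substitute the definition of the CPD into $P_x^{\mathrm{CPD}}(C_\alpha(x))$ and evaluate the integrand pointwise in the mixing variable $a$. Since $\nu_{x,a}$ is supported on $\partial C_a(x)=\{y:S(y)=\tau_a\}$, membership of $\nu_{x,a}$-almost every point in $C_\alpha(x)=\{y:S(y)\le\tau_\alpha\}$ is decided entirely by comparing $\tau_a$ with $\tau_\alpha$. Concretely, if $\tau_a\le\tau_\alpha$ then $\partial C_a(x)\subseteq C_\alpha(x)$ and $\nu_{x,a}(C_\alpha(x))=1$. If $\tau_a>\tau_\alpha$ then $\partial C_a(x)\cap C_\alpha(x)=\emptyset$ and $\nu_{x,a}(C_\alpha(x))=0$. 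Hence
\[
P_x^{\mathrm{CPD}}(C_\alpha(x))=\int_0^1 \mathbf 1\{\tau_a\le\tau_\alpha\}\,d\pi(a)=\pi\{a:\tau_a\le\tau_\alpha\}.
\]
Measurability of this set is immediate, since $a\mapsto\tau_a$ is a monotone step function.

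For the inequality, I use monotonicity of the threshold. The index $k(a)=\lceil(1-a)(n+1)\rceil$ is nonincreasing in $a$, so $\tau_a=S_{(k(a))}$ is nonincreasing in $a$. Thus $a\ge\alpha$ implies $\tau_a\le\tau_\alpha$, giving $[\alpha,1)\subseteq\{a:\tau_a\le\tau_\alpha\}$ and therefore the bound $\ge\pi([\alpha,1))$. I will adopt a convention for $k>n$, namely $\tau_a=+\infty$, in which case $C_a=\mathcal Y$ and $\partial C_a$ is empty. That edge case needs a remark: either $\pi$ puts no mass there, or $\nu_{x,a}$ is defined arbitrarily. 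In either situation the comparison $\tau_a\le\tau_\alpha$ still behaves correctly for $a\ge\alpha$.

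The equality at $\alpha_k=1-k/(n+1)$ is where I expect the real work. I need $\{a:\tau_a\le\tau_{\alpha_k}\}=[\alpha_k,1)$ up to $\pi$-null sets. For $a<\alpha_k$ we have $(1-a)(n+1)>k$, so $k(a)\ge k+1$ and $\tau_a=S_{(k(a))}\ge S_{(k+1)}$. The required strict inequality $\tau_a>\tau_{\alpha_k}$ therefore needs $S_{(k+1)}>S_{(k)}$, i.e. no ties among the calibration scores. My approach is to assume the scores are almost surely distinct, as under continuous scores, or alternatively to read the $\tau_a$ as random tie-broken thresholds. I will state this explicitly as the hypothesis under which the second display holds. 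With distinct scores the set is exactly $[\alpha_k,1)$, which gives $P_x^{\mathrm{CPD}}(C_{\alpha_k}(x))=\pi([\alpha_k,1))$. A subtlety is that when $\tau_a=\tau_{\alpha_k}$, the boundary measure $\nu_{x,a}$ lies on the same level set and is correctly counted inside $C_{\alpha_k}$.

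Finally, specializing to $\pi=\mathrm{Unif}(0,1)$ gives $\pi([\alpha,1))=1-\alpha$, which yields the stated corollary.
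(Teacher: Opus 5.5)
Your proposal is correct and follows the paper's proof essentially line for line: the integrand collapses to $\mathbf 1\{\tau_a\le\tau_\alpha\}$, monotonicity of $a\mapsto\tau_a$ gives the inequality, and the no-ties assumption makes $\{a:\tau_a\le\tau_{\alpha_k}\}=[\alpha_k,1)$. Your edge-case remark goes beyond the paper, which ignores $a<1/(n+1)$ where $\tau_a=+\infty$ and $\partial C_a(x)=\emptyset$; but choosing $\nu_{x,a}$ arbitrarily there keeps only the inequality, not the equalities, because such $\nu_{x,a}$ may charge $C_{\alpha_k}(x)$, and since $\mathrm{Unif}(0,1)$ gives this region mass $1/(n+1)$, the uniform-case equality needs a convention such as $\nu_{x,a}(\{y:S(y)\le S_{(n)}\})=0$ for those $a$.
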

Proof in Appendix \ref{apx:sec:proofs}. By Proposition \ref{prop:cpd-calibration}, any $P_x^{\mathrm{CPD}}$ is $\pi$-admissible with respect to the conformal filtration. When $\pi=\mathrm{Unif}(0,1)$, this gives $P_x^{\mathrm{CPD}}(C_\alpha(x)) = 1-\alpha$ for the empirical conformal prediction sets. This holds independently of the particular choice of boundary measures $\{\nu_{x, \alpha}\}_{\alpha \in (0, 1)}$, thus freeing us to choose them according to other criteria (e.g. optimize predictive skill).

\begin{wrapfigure}[9]{R}{0.5\textwidth}
 \begin{minipage}{0.5\textwidth}
 \vspace{-1.9em}
  \begin{algorithm}[H]
   \caption{CPD Sampling}\label{alg:sampling}
   \begin{algorithmic}
   \STATE \textbf{Input:} Score $S(\cdot)$, prediction $\hat y$.
   \STATE 1. Sample $\alpha \sim \pi$
   \STATE 2. Sample $y_0 \sim \mu_x$
   \STATE 3. Construct flow $ \Phi_\alpha$ given $S(\cdot)$ and $\hat y$.
   \STATE \textbf{Return:} $y = \lim_{t \rightarrow \infty} \Phi_\alpha(t, y_0)$
   \end{algorithmic}
  \end{algorithm}
 \end{minipage}
\end{wrapfigure}
In practice, we can specify $\nu_{x, \alpha}$ through a pushforward construction under our induced flow. Let $\mu_x$ be a base measure on $\mathcal{Y}$ and define 
\begin{equation} \label{eqn:hitting}
	\Pi_{x, \alpha}(y_0) = \lim_{t \rightarrow \infty} \Phi_\alpha(t, y_0)
\end{equation} 
as the measurable hitting map that transports initial points $y_0 \in \mathcal{Y}$ to the boundary set $\partial C_\alpha(x)$.
We take $\nu_{x, \alpha} = (\Pi_{x, \alpha})_ {\#}\mu_x$. Sampling under $P_x^{\mathrm{CPD}}$ then reduces to the following procedure: sample $\alpha \sim \pi$, sample $y_0 \sim \mu_x$, and output $y = \Pi_{x, \alpha}(y_0)$ (Alg. \ref{alg:sampling}). 
This procedure generalizes standard inverse CDF sampling: the confidence level $\alpha$ selects a contour of the conformal filtration, while $\nu_{x, \alpha}$ selects a point along that contour. The resulting distribution is a valid, $\pi$-calibrated representative of the $\pi$-admissible set $\mathcal{P}_{conf} = \{ P : P(C_\alpha) \geq \pi\big([\alpha, 1)\big) \;\forall \alpha \in (0, 1)\}$, and is conformally calibrated if $\pi=\mathrm{Unif}(0,1)$.

\subsection{Forecast skill of Conformal Predictive Distributions}
\label{sec:boundary_optimality}

CPDs are non-unique predictive distributions that are guaranteed to be compatible with the conformal filtration. However, different boundary measures, or, equivalently, different base measures, can produce CPDs with substantially different forecasting skill. 
The following proposition bounds the $W_2$ approximation error between the conditional distribution $P(\cdot \mid x)$ and the CPD distribution $P_x^{\text{CPD}}(\cdot)$. 
\begin{proposition}[$W_2$ approximation bound]\label{prop:w2_cpd}
Fix $x\in\mathcal X$ and let $P_x^{\mathrm{proj}} = \int_0^1(\mathrm{proj}_{x,\alpha})_\# P_x^\star \,d\alpha $ denote the distribution of $\mathrm{proj}_{x,\alpha}(Y)$ for $Y\sim P_x^\star = P(\cdot \mid x)$ and $\alpha\sim\mathrm{Unif}(0,1)$. Under the regularity conditions
stated in Appendix~\ref{proof:w2_cpd}, 
\[
W_2(P_x^\star, P_x^{\mathrm{CPD}})
\le
W_2(P_x^\star, P_x^{\mathrm{proj}})
+
(\mathbb E_\alpha L_\alpha^2)^{1/2}W_2(P_x^\star, \mu_x)
+
C\{\mathbb E_\alpha[\bar\kappa(\alpha)^2R_4(\alpha)]\}^{1/2},
\]
where $R_4(\alpha):=\mathbb E_{\mu_x}[\mathrm{dist}(y,\partial C_\alpha(x))^4]$,
$\bar\kappa(\alpha)$ controls the curvature of the score-gradient paths, and
$C>0$ is a geometric constant.
\end{proposition}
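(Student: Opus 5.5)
The argument uses the triangle inequality for $W_2$ with two intermediate measures. The first is $P_x^{\mathrm{proj}}$ itself. The second is the ``projected base measure''
\[
Q_x := \int_0^1 (\mathrm{proj}_{x,\alpha})_\#\mu_x\,d\alpha,
\]
where $\mathrm{proj}_{x,\alpha}$ denotes the nearest-point (metric) projection onto $\partial C_\alpha(x)$. Recall that
\[
P_x^{\mathrm{CPD}}=\int_0^1(\Pi_{x,\alpha})_\#\mu_x\,d\alpha .
\]
We then write
\[
W_2(P_x^\star,P_x^{\mathrm{CPD}})\le W_2(P_x^\star,P_x^{\mathrm{proj}})+W_2(P_x^{\mathrm{proj}},Q_x)+W_2(Q_x,P_x^{\mathrm{CPD}}).
\]
The first term already appears in the bound; it is the score-induced distortion. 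Throughout, we bound $W_2$ between two $\alpha$-mixtures by building a coupling that draws a common $\alpha\sim\mathrm{Unif}(0,1)$ and then couples the conditional laws. This gives the mixture convexity bound
\[
W_2^2\Big(\int\rho_\alpha\,d\alpha,\int\rho'_\alpha\,d\alpha\Big)\le\int W_2^2(\rho_\alpha,\rho'_\alpha)\,d\alpha,
\]
and the measurability of $\alpha\mapsto\nu_{x,\alpha}$ assumed in Section \ref{sec:cpds} makes this legitimate.

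\textbf{Base-measure term.} For fixed $\alpha$, take an optimal coupling $(Y,Y_0)$ of $P_x^\star$ and $\mu_x$. We assume, as part of the appendix regularity conditions, that $\mathrm{proj}_{x,\alpha}$ is $L_\alpha$-Lipschitz on the relevant region, for instance a tubular neighbourhood where the projection is single-valued. Then
\[
\mathbb E\|\mathrm{proj}_{x,\alpha}(Y)-\mathrm{proj}_{x,\alpha}(Y_0)\|^2\le L_\alpha^2W_2^2(P_x^\star,\mu_x).
\]
Integrating over $\alpha$ and applying the mixture bound gives
\[
W_2(P_x^{\mathrm{proj}},Q_x)\le(\mathbb E_\alpha L_\alpha^2)^{1/2}\,W_2(P_x^\star,\mu_x).
\]

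\textbf{Flow-distortion term.} Here we use the identity coupling: the same $y_0\sim\mu_x$ is sent to both $\mathrm{proj}_{x,\alpha}(y_0)$ and $\Pi_{x,\alpha}(y_0)$. This gives
\[
W_2^2(Q_x,P_x^{\mathrm{CPD}})\le\mathbb E_\alpha\,\mathbb E_{\mu_x}\|\Pi_{x,\alpha}(y_0)-\mathrm{proj}_{x,\alpha}(y_0)\|^2 .
\]
The key geometric lemma is a pointwise estimate
\[
\|\Pi_{x,\alpha}(y_0)-\mathrm{proj}_{x,\alpha}(y_0)\|\le C\,\bar\kappa(\alpha)\,d^2,\qquad d:=\mathrm{dist}(y_0,\partial C_\alpha(x)).
\]
Squaring it yields $C^2\bar\kappa(\alpha)^2d^4$, and taking expectations gives exactly $C\{\mathbb E_\alpha[\bar\kappa(\alpha)^2R_4(\alpha)]\}^{1/2}$. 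To prove the lemma, recall from Section \ref{sec:levelset_sampling} that the minimum-norm flow moves along the integral curve of $\nabla S$, since $v_\alpha$ is parallel to $\nabla S$. Its image is therefore the reparametrized gradient path $\gamma$ from $y_0$ to $\partial C_\alpha(x)$. The nearest-point projection is instead reached along the straight normal segment $\sigma$ from the foot point.

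The argument then runs in four steps. First, bound the length of $\gamma$ by $c\,d$. This uses $\|\nabla S\|\ge m$ near the boundary together with Proposition \ref{prop:convergence} part 2 (total arclength $\le|S(y_0)-\tau_\alpha|/m$) and Lipschitzness of $S$, which gives $|S(y_0)-\tau_\alpha|\lesssim d$. Second, since the unit tangent of $\gamma$ turns at rate at most $\bar\kappa(\alpha)$ (this is the definition of $\bar\kappa$ as the curvature of the normalized gradient field $\nabla S/\|\nabla S\|$ along paths), $\gamma$ deviates from its initial chord direction by at most $\tfrac12\bar\kappa\,\mathrm{len}(\gamma)^2$. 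Third, the direction from $y_0$ to its projection is the boundary normal $\nabla S/\|\nabla S\|$ at the foot point. This differs from the gradient direction at $y_0$ by at most $\bar\kappa\, d$, so both endpoints lie within $O(\bar\kappa d^2)$ of the same ray. Fourth, transversality of that ray to $\partial C_\alpha(x)$, again from $\|\nabla S\|\ge m$, converts this into an $O(\bar\kappa d^2)$ distance between the endpoints on the boundary. All of these constants are collected into $C$.

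I expect this geometric lemma to be the main obstacle. Making the comparison between the gradient path and the normal projection rigorous needs a reach or tubular-neighbourhood condition, so that the projection is unique and the foot-point normal argument is valid. It also needs care when $y_0$ is far from the boundary. For far points I would fall back on the crude bound
\[
\|\Pi_{x,\alpha}(y_0)-\mathrm{proj}_{x,\alpha}(y_0)\|\le\mathrm{len}(\gamma)+d\lesssim d,
\]
which is dominated by $d^2$ once $d$ exceeds the reach scale, absorbing the reach into $C$. Combining the three terms then gives the stated inequality.
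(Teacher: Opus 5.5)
Your decomposition coincides with the paper's. You use the same intermediate measures $P_x^{\mathrm{proj}}$ and $\int_0^1(\mathrm{proj}_{x,\alpha})_\#\mu_x\,d\alpha$, the same optimal-coupling/Lipschitz bound for the base-measure term, the identity coupling for the flow term, and the same arclength bound $\ell\le (C_S/c)\,d$. The two routes differ only in the pointwise estimate $\|\Pi_{x,\alpha}(y_0)-\mathrm{proj}_{x,\alpha}(y_0)\|_2\lesssim\bar\kappa(\alpha)d^2$.

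The paper inserts the first hit $q_{x,\alpha}(y_0)$ of the initial gradient ray and bounds $\|\Pi_{x,\alpha}(y_0)-q_{x,\alpha}(y_0)\|_2$ by path curvature. It then simply \emph{assumes} $\kappa_{\mathrm{proj}}(\alpha)=\mathrm{ess\,sup}\,\|q_{x,\alpha}(y_0)-p\|_2/d^2<\infty$ and folds it into $\bar\kappa=H_\alpha/c+\kappa_{\mathrm{proj}}$. All bounds on $S$ are imposed only along flow paths and projection segments.

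Your steps 3--4 instead derive the ray/projection mismatch from the first-order condition $p=y_0+\sigma d\,n_x(p)$. This removes the ad hoc $\kappa_{\mathrm{proj}}$, but the transversality step imports neighbourhood regularity and a near/far split. Place that split at $\bar\kappa d\asymp 1$, not at the reach $r_0$; otherwise your $C$ grows like $1/(r_0\bar\kappa)$.

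The paper's step hides the same issue. Integrating the direction deviation only puts $\Pi_{x,\alpha}(y_0)$ within $\tfrac12\kappa_{\mathrm{flow}}\ell^2$ of $y_0+\ell\sigma n_x(y_0)$, not of $q_{x,\alpha}(y_0)$, so a length comparison is needed there too.

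Both gaps close using only path/segment hypotheses, by comparing score budgets. With $z_u=y_0+u\sigma n_x(p)$,
\[
\int_0^\ell\|\nabla S(\eta(s))\|_2\,ds=|S(y_0)-\tau_\alpha|=\int_0^d\nabla S(z_u)^\top n_x(p)\,du .
\]
Since $\|\nabla S\|_2$ varies at rate at most $H_\alpha$ along both unit-speed curves, this gives $0\le\ell-d\le\frac{H_\alpha}{2c}(\ell^2+d^2)$. Combining this with your steps 2 and 3, where $\|Dn_x\|_{\mathrm{op}}\le H_\alpha/c$, yields
\[
\|\Pi_{x,\alpha}(y_0)-\mathrm{proj}_{x,\alpha}(y_0)\|_2\le\frac{H_\alpha}{c}\Big(\ell^2+\tfrac32 d^2\Big)\le\frac{H_\alpha}{c}\Big(\frac{C_S^2}{c^2}+\tfrac32\Big)d^2 .
\]
This needs no transversality, no case split and no $\kappa_{\mathrm{proj}}$. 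The price is that the full Hessian bound enters, not just the curvature $\|Dn_x\|_{\mathrm{op}}$ of the gradient lines, because its normal--normal part controls how fast $\|\nabla S\|_2$ changes.
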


Proof in Appendix~\ref{apx:sec:proofs}. 
The first term, $W_2(P_x^\star, P_x^{\mathrm{proj}})$, is the cost of transporting the true conditional $P_x^\star = P(\cdot \mid x)$ into an admissible CPD. The smaller this distance is, the more compatible the CPD class is with the data in the sense that it can support distributions like $P(\cdot \mid x)$. We can keep this cost small by choosing scores whose level sets are close to the high density regions of $P(\cdot \mid x)$, i.e. $S(y) \approx g(-\log p_x(y))$, where $p_x$ is the density of $P(\cdot \mid x)$ and $g$ monotonic. Because $p_x$ is unknown in practice, this term represents the price we pay for imposing our chosen score on the data.
%

The second term, $(\mathbb E_\alpha [L_\alpha^2])^{1/2}W_2(P_x^\star, \mu_x)$, measures how close the base measure $\mu_x$ is to $P(\cdot \mid x)$ and the smoothness of projection onto the level sets. If $\mu_x \approx P(\cdot \mid x)$, then this term can be kept small. However, if the level sets are highly non-convex or sharply curved, then $E_\alpha [L_\alpha^2]$ can be large and amplify any error due to not sampling from the true conditional. 

The last term, $C (\mathbb E_\alpha[\bar\kappa(\alpha)^2R_4(\alpha)])^{1/2}$, measures the amount of distortion introduced by gradient flow sampling over direct projection sampling with $\mathrm{proj}_{x,\alpha}$. When the curvature of the velocity field, controlled by $\bar\kappa(\alpha)$, and distance to the level sets $R_4(\alpha)$ are both small, then this term will be small. Otherwise, if the flow paths are highly curved then the prediction error can be high, even if the score and base measure are well aligned with the data. 

Proposition \ref{prop:w2_cpd} essentially states that, in order to control the prediction error $W_2(P_x^\star, P_x^{\mathrm{CPD}})$, we should prefer smooth scores with convex, or nearly convex, level sets that are well-aligned with the conditional distribution. We should also initialize our flows as close to the conditional distribution as possible. In practice this may be difficult to achieve exactly, but norm based scores and initializing with calibration responses works well empirically (Section \ref{sec:experiments}).

We note that Proposition~\ref{prop:w2_cpd} immediately bounds the energy distance \citep{szekely2013energy}, a proper scoring rule \citep{gneiting2007strictly}, $D_E(P_x^\star, P_x^{\text{CPD}}) = 2\,\mathbb{E}_{Y \sim P_x^\star, Z \sim P_x^{\text{CPD}}}[\|Y {-} Z\|] -\mathbb{E}_{Z, Z'}[\|Z {-} Z'\|] - \mathbb{E}_{Y, Y'}[\|Y {-} Y'\|]$ via the inequality $D_E \leq 2\,W_2$. Thus, bounding Wasserstein distances helps control forecast error.

\section{Numerical Experiments}
\label{sec:experiments}
%

We evaluate the proposed framework along two axes (i) convergence and diversity of the nonconformity flow samples (Section \ref{sec:convergence}) and (ii) conformal predictive distributions (CPDs) as calibrated predictive objects (Section \ref{sec:distribution_quality}). Definitions of score functions, data generation, preprocessing, model architectures, and evaluation metrics are deferred to Appendices \ref{apx:sec:data} - \ref{apx:sec:metrics}. Additional simulations comparing computational cost and scaling are provided in Appendix \ref{apx:sec:scaling}.

\subsection{Evaluating Conformal Level Sets}
\label{sec:convergence}


\paragraph{Convergence} We first evaluate whether the proposed flow (Equation \ref{eqn:score_flow}) converges to within tolerance ($\varepsilon=10^{-6}$) with $\lambda$ set per trajectory according to the $\varepsilon$-hitting time equation (Corollary \ref{prop:hitting_time}). We consider isotropic Gaussian, anisotropic Student-t ($df = 3$), 2D Gaussian processes, and 2D GP upsampling targets across a range of settings and score functions
(Appendix \ref{apx:sec:data}). For the vector regression tasks, we train a multilayer perceptron (Appendix \ref{apx:sec:models}) and for the operator tasks, a 2D Fourier Neural Operator (Appendix \ref{apx:sec:models}). All models trained on a single A100 GPU for 25-100 epochs until convergence using the Adam optimizer with learning rate $\gamma=10^{-3}$.



\begin{figure}[h]
\centering
\includegraphics[width=\textwidth]{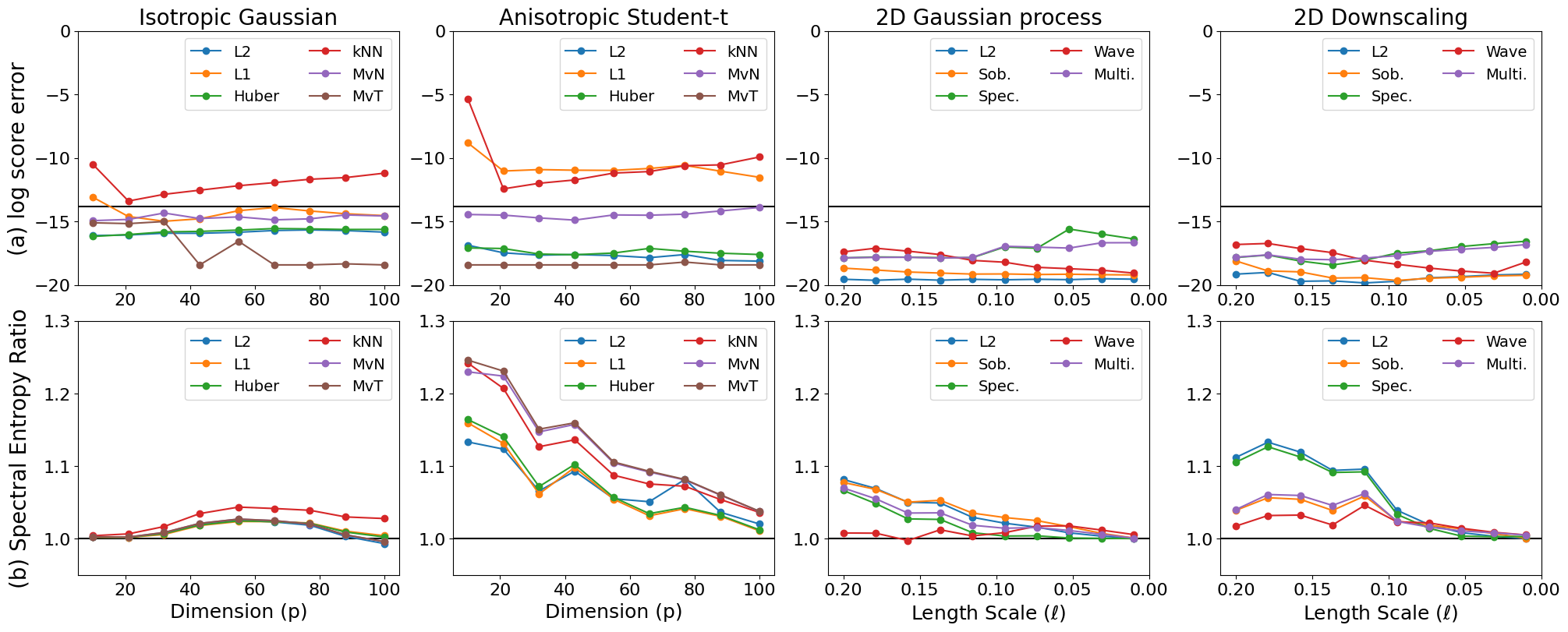}
\caption{\textbf{Top:} Log absolute convergence error averaged across all $\alpha$ levels. \textbf{Bottom:} Spectral entropy of the generated samples, normalized by the spectral entropy of the test data.}
\label{fig:convergence}
\end{figure}
Figure \ref{fig:convergence}a shows the log score error after 20 forward Euler integration steps ($dt = 1/20$) broken out by score function (Appendix \ref{apx:sec:vec_scores} -- \ref{apx:sec:2d_scores}) with the black line indicating the target tolerance. Across all settings and tasks the conformal flow sampler converges to within tolerance except the $\ell_1$ and $k$NN score, due to their non-smoothness. Figure \ref{fig:convergence}b shows the spectral entropy ratio \citep{friedman2022vendi}, which measures sample diversity relative to the observed data. All settings show that nonconformity flows generate samples that are at least as diverse as the target data (ratio $\geq 1$), thus the sampler is not collapsing to a low dimensional subspace even in high dimension. Additional simulations (Figure \ref{fig:diversity} in Appendix \ref{apx:sec:data}) show that even within a specific $\alpha$ level the flow produces diverse samples.

\subsection{Evaluating Conformal Predictive Distributions}
\label{sec:distribution_quality}

\paragraph{Distribution Approximation} We next compare CPDs to widely used machine learning ensemble baselines: MC Dropout \citep{gal2016dropout}, Deep Ensembles \citep{lakshminarayanan2017simple}, Mean-Variance Networks \citep{nix1994estimating}, Conditional Flow Matching \citep{lipman2023flow},  Mixture Density Networks \citep{bishop1994mixture}, and Implicit Quantile Networks \citep{dabney2018implicit}. Details of each method provided in Appendix \ref{apx:sec:models}. We consider five structured regression tasks: predict 2D Gaussian processes, inverse model 2D Elliptic PDE solutions, approximate Navier-Stokes solutions ($t = 2$), downscale North American precipitation fields, and debiasing a global climate model's temperature fields (Appendix \ref{apx:sec:data}). 
These latter two experiments use temporally correlated data, so exchangeability is violated and coverage is not guaranteed.
We consider both a global $\ell_2$ score (CPD-G) and local composite score (CPD-L) (Appendix \ref{apx:sec:2d_scores}).
We measure the quality of each predictive distribution over the test split via Energy Distance (ED) \citep{gneiting2007strictly}, Log Spectral Distance (LSD) \citep{jain1989fundamentals, portilla2000parametric}, and local MMD \citep{binkowski2018demystifying, amir2021understanding} (Appendix \ref{apx:sec:metrics}). ED measures distribution approximation,
LSD measures how energy is distributed over frequency scales, measuring spatial pattern similarity, while the local MMD measures local detail fidelity. All metrics are computed from $M = 20$ predictive samples per test instance.

\begin{table}[h]
    \caption{Comparison of predictive distribution and conformal predictive distribution (CPD) baselines. Bootstrap standard errors (Appendix Tables \ref{tab:baselines_group1}, \ref{tab:baselines_group2}) are generally below $10^{-2}$.}
    \resizebox{\linewidth}{!}{%
    \centering
    \small
    \setlength{\tabcolsep}{4pt}
    \begin{tabular}{l ccc ccc ccc ccc ccc ccc}
    \toprule
    	 & \multicolumn{3}{c}{\bf GP Regression} &
	     \multicolumn{3}{c}{\bf Elliptic PDE Inv.}  & 
	     \multicolumn{3}{c}{\bf Navier Stokes}  & 
	     \multicolumn{3}{c}{\bf Precip. Downscale} & 
	     \multicolumn{3}{c}{\bf Climate Debias} \\
	     \cmidrule(lr){2-4} 
	     \cmidrule(lr){5-7}
	     \cmidrule(lr){8-10}
	     \cmidrule(lr){11-13}
	     \cmidrule(lr){14-16}
    Method		& ED  & LSD & MMD  & ED  & LSD & MMD  & ED  & LSD & MMD  & ED  & LSD & MMD & ED  & LSD & MMD  \\
    \midrule
    CPD-G 		& 0.313  & 0.069 & 0.023 & 0.068 & 0.103  & 0.006 & 0.233 & 0.252 & 0.043 & 0.395 & 0.067 & 0.019 & 0.106 & 0.003 & 0.024  \\
    CPD-L 		& 0.313  & 0.069 & 0.036 & 0.067 & 0.099 & 0.006 & 0.229 & 0.166 & 0.039 & 0.400 & 0.066 & 0.016 & 0.130 & 0.006 & 0.035 \\
    Drop.         	& 0.400  & 0.083 & 0.187 & 0.115 & 0.263  & 0.158 & 0.295 & 0.245 & 0.142 & 0.418 & 0.072 & 0.075 & 0.127 & 0.004 & 0.210 \\
    D. Ens.		& 0.346  & 0.089 & 0.064 & 0.118 & 0.234  & 0.044 & 0.269 & 0.193 & 0.055 & 0.399 & 0.059 & 0.138 & 0.119 & 0.002 & 0.108 \\
    IQN             	& 0.427  & 0.075 & 0.061 & 0.139 & 0.262  & 0.072 & 0.263 & 0.381 & 0.181 & 0.393 & 0.066 & 0.148 & 0.113 & 0.002 & 0.123 \\
    Flow              & 0.410  & 0.138 & 0.167 & 0.113 & 0.351  & 0.040 & 0.235 & 0.182 & 0.048 & 0.374 & 0.065 & 0.093 & 0.118 & 0.003 & 0.113 \\
    MDN         	& 0.367  & 0.213 & 0.211 & 0.141 & 0.457 & 0.169 & 0.236 & 0.677 & 0.171 & 0.361 & 0.090 & 0.114 & 0.104 & 0.002 & 0.199 \\
    MVE		& 0.374  & 0.200 & 0.211 & 0.203 & 6.112  & 0.198 & 0.236 & 0.651 & 0.171 & 0.722 & 0.097 & 0.117 & 0.226 & 0.019 &  0.266 \\
    \bottomrule
    \end{tabular}
    \label{tab:baselines}}
\end{table}

Table \ref{tab:baselines} shows that CPDs achieve competitive ED scores and substantially lower LSD and MMD scores, indicating better alignment with both large-scale structure and fine-scale variability than baseline methods. 
CPD can achieve this because they are initialized with calibration data, whose spatial structure is preserved under the flow.
Representative samples provided in Appendix \ref{apx:sec:qualitative}.

\begin{wrapfigure}[12]{r}{0.45\columnwidth}
    \vspace{-1.2em}
    \includegraphics[width=0.45\textwidth]{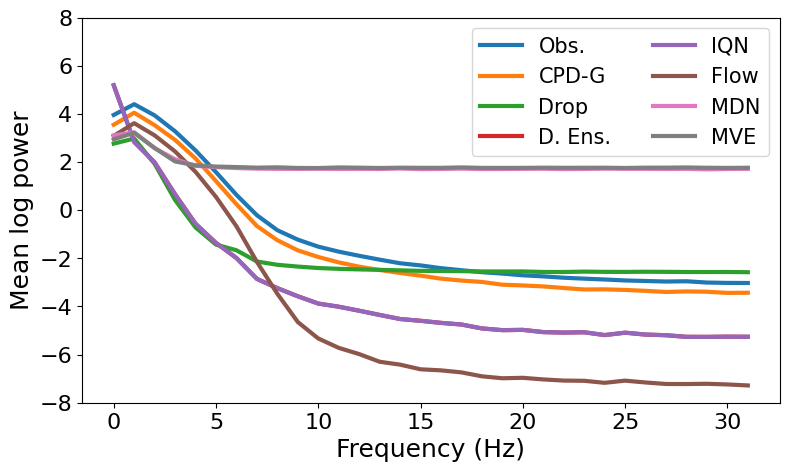}
  \caption{CPDs match spectral shape.}
\label{fig:spectral_decay}
\end{wrapfigure}
Figure \ref{fig:spectral_decay} shows the average log power spectrum of each method on the 2D GP regression task. Only CPD-G predictions match the target process across the full spectrum. Alternatives either over- or under-estimate high frequency behavior, leading to unrealistic and off-manifold samples (Figure \ref{fig:climate_samples}). Notably, even when the model predictions fail to capture high-frequency behavior (D. Ens), CPD-G still captures it.
%
%
Figure \ref{fig:climate_samples} shows example draws from the precipitation downscaling experiment under the CPD against Dropout and Flow. CPD samples avoid both oversmoothing and high-frequency noise injection, as reflected in the lower LSD and MMD metrics (Table \ref{tab:baselines}).
\begin{figure}[h]
\centering
\includegraphics[width=0.99\textwidth]{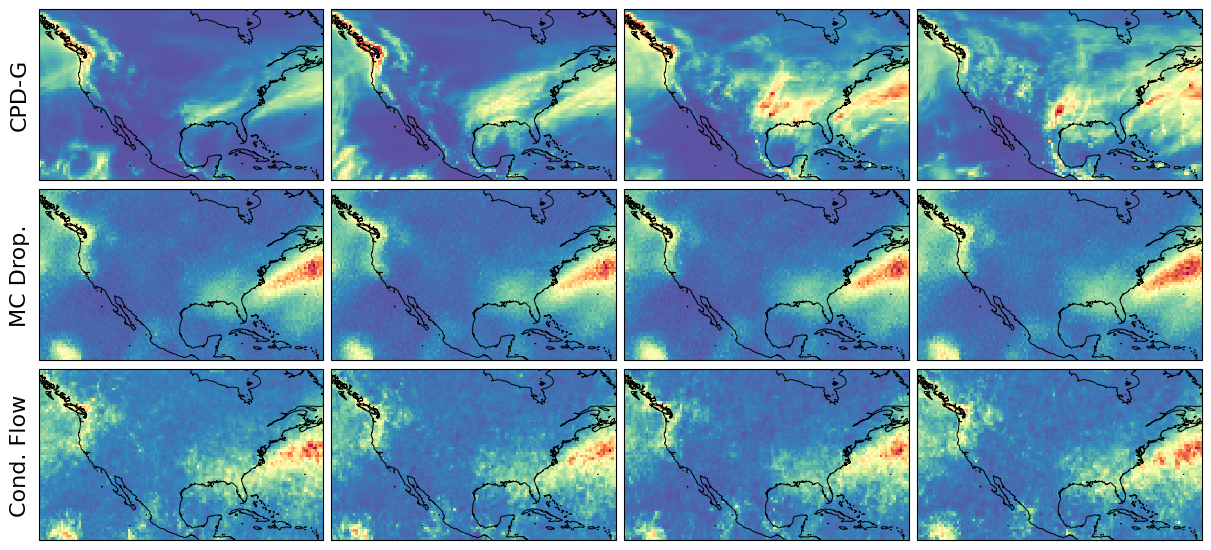}
\caption{Sample precipitation intensity from CPD-G, Dropout, and the conditional Normalizing Flow. CPDs avoid oversmoothing, excessive drizzle, and exhibit realistic heterogeneity across samples.}
\label{fig:climate_samples}
\end{figure}


\paragraph{Scenario Analysis and Targeted Sampling} Finally, we demonstrate how CPDs enable targeted sampling at arbitrary confidence levels, including extreme-tail and rare-event regimes that are inaccessible to conventional ensemble or conformal approaches. The HURDAT2 dataset records hurricane trajectories from 1851 through 2025 \citep{landsea2013atlantic}. We train a 1D CNN model to predict the next 12 time steps of the trajectory, given its previous 24 steps and auxiliary information including wind speed and pressure (Appendix \ref{apx:sec:data}). We conformalize the predicted trajectories using a naive $\ell_2$ score and conditional geometric trajectory score (CGT) defined in Appendix \ref{apx:sec:traj_scores}.

Figure \ref{fig:hurr} draws samples from the $\ell_2$ CPD and the CGT CPD within three $\alpha$ ranges: $\alpha \in (0.9, 1.0)$ (central samples), $\alpha \in (0.5, 1.0)$ (typical predictions) and $\alpha \in (0.0, 0.1)$ (extremes). Under either score, the CPD samples become increasingly more extreme as we lower the $\alpha$ range. These figures highlight CPDs as controllable predictive distributions rather than fixed-resolution uncertainty summaries, and their potential for highly efficient extreme tail sampling and rare-event estimation.

%

\begin{figure}[h]
\centering
\includegraphics[width=0.99\textwidth]{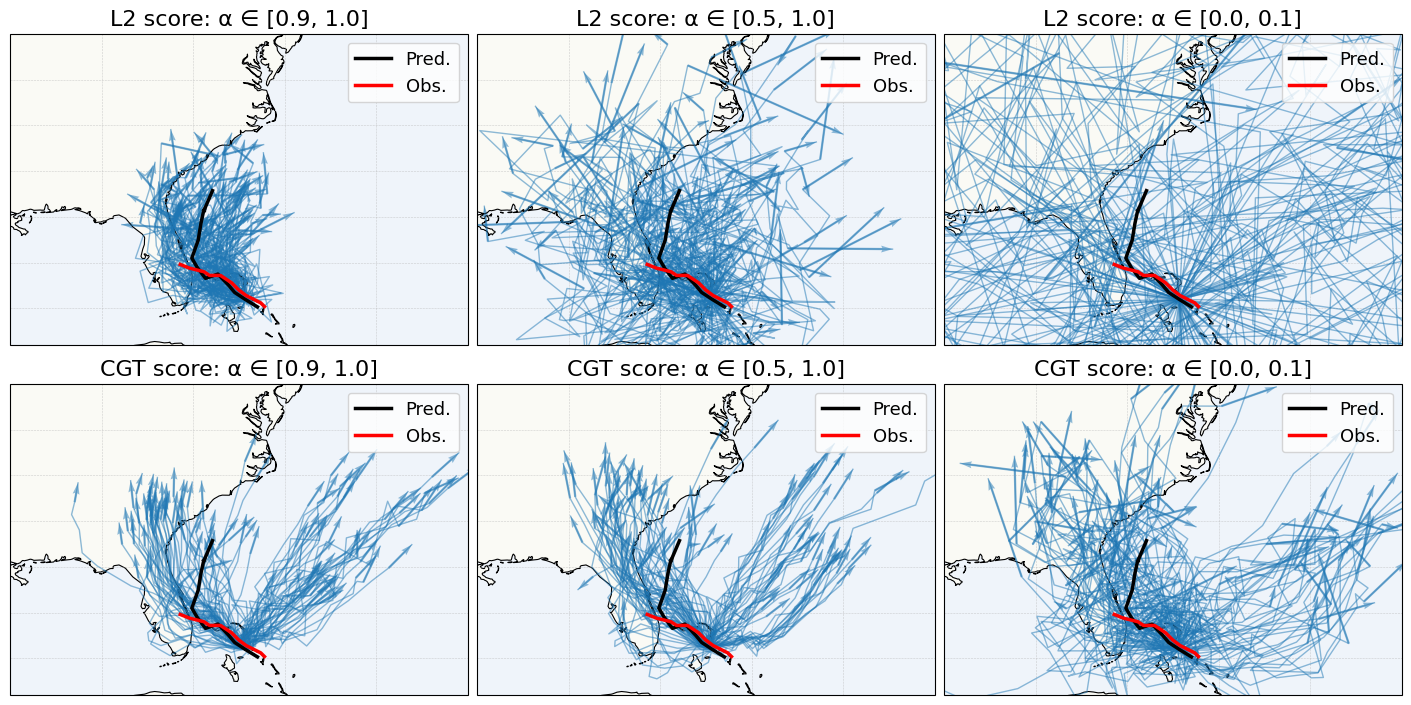}
\caption{CPDs can selectively emphasize any range of the distribution and their utility depends on the precision of the score. Scores that align with the data (CGT) produce more meaningful uncertainty.}
\label{fig:hurr}
\end{figure}

Furthermore, Figure \ref{fig:hurr} also highlights the importance of choosing an appropriate score. Assuming an $\ell_2$ score results in nonsensical trajectories as $\alpha \rightarrow 0$, while the CGT score is able to leverage conditional information (wind speed, pressure, location) and geometric constraints (velocity and curvature) to produce plausible samples at any range. The CGT score even shows a bifurcation pattern, which emerges purely from the score function, rather than the underlying model. 

%
%
%

\section{Discussion}

We introduced nonconformity flows as a constructive way to interact with differentiable conformal prediction sets by sampling their boundaries. This yields a scalable Monte Carlo representation of the conformal sets that requires no additional modeling, such as training generative models or learning transport maps. Moreover, we can mix across sequences of confidence levels to create conformal predictive distributions (CPDs).
 The resulting CPDs were shown to provide rigorous and highly competitive probabilistic forecasts (Section \ref{sec:experiments}).
 
There are several limitations of this work that we hope to address in future work. First, our approach does not guarantee full boundary exploration and, therefore, does not fully represent all possible samples consistent with the conformal level sets. This is generally desirable for forecasting, however, where we want to adhere closely to the data manifold, which is what the calibration data initialization helps us achieve. Additionally, the nonconformity flow requires a differentiable score. Non-smooth scores such as $\ell_1$ or $k$NN can cause the ODE solver to stall or fail to converge, as observed in Figure \ref{fig:convergence}. Proposition \ref{prop:w2_cpd}  is intended to clarify which components may influence forecasting skill, but it is not a tight bound. In future work we may study exact tangent diffusion for boundary exploration, applications to non-exchangeable conformal inference (e.g. temporally ordered data used in the precipitation and climate model debiasing experiments), and refinements on the bound.
 
Finally, while CPDs generate prediction samples, they are not proper generative models \textit{per se}. Integrating CPDs with generative models could potentially improve the performance of this approach; either via score learning or by coupling the nonconformity flow with a learned normalizing flow.
 
 

\bibliography{refs}

\appendix

\section{Nonconformity velocity field} \label{apx:sec:derivation}

\noindent  We recall the setting of Section~3.1. Fix a test input $x$ and prediction $\hat y$. Let
$S(\,\cdot\,,\hat y):\mathcal Y\to\mathbb R$ be continuously differentiable and write
$\nabla S(y) := \nabla S(y,\hat y)$. For a target miscoverage level $\alpha\in(0,1)$,
let $\tau_\alpha\in\mathbb R$ denote the split-conformal threshold, and define the
target level set
\[
\partial C_\alpha(x) \;:=\; \{y\in\mathcal Y : S(y,\hat y)=\tau_\alpha\}.
\]
Consider the controlled dynamical system (ODE)
\begin{align} 
   y'(t)&= v(y(t)) \\
   S'(y(t)) &= -\lambda(S(y(t)) - \tau_\alpha)
\end{align}
where $y(t) \in \mathcal{Y}$ is a trajectory in $\mathcal{Y}$ and $S(y(t)) = S(f_\theta(x), y(t))$ is the score of state $y(t)$ with $f_\theta(x) = \hat y \in \mathcal{Y}$ held fixed, and $v: \mathcal{Y} \times \mathbb{R} \to \mathcal{Y}$ is a vector field that governs the induced flow.

We show that the analytical solution $v_\alpha(\cdot)$ at any time $t \geq 0$ is 
\begin{equation}
    v_\alpha(y(t)) = -\lambda(S(y(t))- \tau_\alpha) \frac{\nabla S(y(t))}{\Vert \nabla S(y(t)) \Vert^2_2},
\end{equation}
where $\Vert \cdot \Vert_2^2$ is the squared Euclidean norm on $\mathcal{Y}$.

\paragraph{Derivation} First, we expand the score controller
$$
	S'(y(t)) = \nabla S(y(t))^T y'(t) = \nabla S(y(t))^T v(y(t)),
$$
to show that it is equivalent to the linear constraint:
\begin{equation} \label{eqn:linear_control}
	\nabla S(y(t))^T v(y(t)) = -\lambda(S(y(t)) - \tau_\alpha).
\end{equation}
Among all velocity fields $v(\cdot)$ satisfying \ref{eqn:linear_control}, we choose the minimum Euclidean norm velocity field
\begin{equation}
	v_\alpha(y(t)) \in \arg\min_{v} \frac{1}{2}||v||_2^2 \quad \text{s.t.} \quad \nabla S(y(t))^T v(y(t)) = -\lambda(S(y(t)) - \tau_\alpha).
\end{equation}
This is a single constraint quadratic program, which we rewrite in Lagrangian form as
\[
\mathcal{L}(v,\eta) = \frac12\|v\|_2^2 + \eta\left(\nabla S(y(t))^\top v(y(t)) + \lambda(S(y(t))-\tau_\alpha)\right).
\]
The first-order condition gives
\[
\nabla_v\mathcal L(v,\eta)=v+\eta\nabla S(y(t))=0, \qquad v=-\eta\nabla S(y(t)).
\]
Enforcing the constraint,
\[
-\eta\|\nabla S(y(t))\|_2^2 = -\lambda(S(y(t))-\tau_\alpha),
\]
so, provided $\|\nabla S(y(t))\|_2\neq 0$,
\[
\eta = \lambda\frac{S(y(t))-\tau_\alpha}{\|\nabla S(y(t))\|_2^2}.
\]
Substituting into $v=-\eta\nabla S(y(t))$ yields
\[
v_\alpha(y(t)) = -\lambda(S(y(t))-\tau_\alpha) \frac{\nabla S(y(t))}{\|\nabla S(y(t))\|_2^2}.
\]

\section{Proofs} \label{apx:sec:proofs}

\noindent  We recall the setting of Section~3.1.
Fix $x \in \mathcal{X}$ and define the score field $S:\mathcal{Y}\to\mathbb{R}$ by $S(y) := S(x,y)$. Let $\tau_\alpha \in \mathbb{R}$ denote the split-conformal threshold and define the conformal boundary
\[
\partial C_\alpha(x) \;:=\; \{y \in \mathcal{Y} : S(y) = \tau_\alpha\}.
\]
Let $y(t)=\Phi_\alpha(t,y_0)$ denote the flow induced by the ODE
\begin{equation}\label{eq:ode_flow}
	y'(t) \;=\; v_\alpha(y(t)), \qquad v_\alpha(y) \;:=\; -\lambda\,\frac{S(y)-\tau_\alpha}{\|\nabla S(y)\|_2^2}\,\nabla S(y),
\end{equation}
whenever $\nabla S(y)\neq 0$, where $\lambda>0$ is fixed. 

\subsection{Proof of Proposition~\ref{prop:convergence}}
We restate Proposition~\ref{prop:convergence} here: 
\begin{proposition}[Convergence]
Define the score error $\varepsilon(t):=S(y(t))-\tau_\alpha$. Assume $y(t)$ exists for all $t \geq 0$ and $\nabla S(y(t)) \neq 0$ whenever $S(y(t)) \neq \tau_\alpha$. Then:
\begin{enumerate}
	\item \textbf{Score convergence:}
	$\varepsilon'(t)=-\lambda\,\varepsilon(t)$, hence
	\[
	\varepsilon(t)=\varepsilon(0)\,e^{-\lambda t}
	\quad\text{and}\quad
	S(y(t))\to\tau_\alpha \ \text{as } t\to\infty.
	\]
	In particular, every $\omega$-limit point of $(y(t))_{t\ge 0}$ lies in $\partial C_\alpha(x)$.

	\item \textbf{Pointwise convergence:}
	Assume, in addition, that there exist $T<\infty$ and $m>0$ such that
	\[
		\|\nabla S(y(t))\|_2 \ge m
		\qquad \text{for all } t\ge T.
	\]
	Then $y(t)$ converges to a unique limit point $y_\infty \in \partial C_\alpha(x)$ and
	\[
		\|y(t)-y_\infty\|_2
		\le
		\frac{1}{m}\,|S(y_0)-\tau_\alpha|\,e^{-\lambda t}
		\qquad\text{for all } t\ge T.
	\]
\end{enumerate}
\end{proposition}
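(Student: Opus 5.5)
Part 1 follows from the chain rule applied along the trajectory. Wherever $S(y(t))\neq\tau_\alpha$ the assumption gives $\nabla S(y(t))\neq 0$, so $v_\alpha$ is well defined. Then
\[
\varepsilon'(t)=\nabla S(y(t))^\top y'(t)=-\lambda\,\varepsilon(t)\,\frac{\nabla S(y(t))^\top\nabla S(y(t))}{\|\nabla S(y(t))\|_2^2}=-\lambda\,\varepsilon(t).
\]
This is exactly the constraint imposed in Appendix~\ref{apx:sec:derivation}. The linear ODE has the unique solution $\varepsilon(t)=\varepsilon(0)e^{-\lambda t}$.

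One point needs care: the set of times where $\varepsilon=0$. If $\varepsilon(0)=0$, we take $v_\alpha=0$, so the trajectory is constant and the formula holds trivially. If $\varepsilon(0)\neq0$, uniqueness of solutions to the scalar linear ODE shows $\varepsilon$ never vanishes in finite time. Hence the formula holds for all $t\ge0$, and $S(y(t))\to\tau_\alpha$.

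For the $\omega$-limit statement, take $t_k\to\infty$ with $y(t_k)\to y^*$. By continuity of $S$ (assumed continuously differentiable in the appendix setting), $S(y^*)=\lim S(y(t_k))=\tau_\alpha$, so $y^*\in\partial C_\alpha(x)$.

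For part 2, bound the speed directly:
\[
\|y'(t)\|_2=\lambda|\varepsilon(t)|/\|\nabla S(y(t))\|_2\le (\lambda/m)|S(y_0)-\tau_\alpha|e^{-\lambda t}\quad\text{for } t\ge T.
\]
This function is integrable on $[T,\infty)$, so $y(t)$ has finite arc length there. For $T\le t<s$,
\[
\|y(s)-y(t)\|_2\le\int_t^s\|y'(r)\|_2\,dr\le \tfrac{1}{m}|S(y_0)-\tau_\alpha|\,(e^{-\lambda t}-e^{-\lambda s}).
\]
This is a Cauchy criterion, so $y_\infty=\lim_{t\to\infty}y(t)$ exists, using completeness of the finite-dimensional representation space. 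Letting $s\to\infty$ gives the stated rate $\frac1m|S(y_0)-\tau_\alpha|e^{-\lambda t}$. The limit is unique because it is a genuine limit. It is also an $\omega$-limit point, so part 1 places it in $\partial C_\alpha(x)$.

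The main subtlety is in part 1: justifying the chain rule and the formula when $\nabla S$ exists only almost everywhere, and handling the possibility that $\varepsilon$ hits zero. I would rely on the standing assumption that $S$ is $C^1$ along the trajectory (as in the appendix setup) together with ODE uniqueness to exclude finite-time hitting. Everything else is routine integration.
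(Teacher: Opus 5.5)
Your proposal is correct and follows the paper's proof essentially step for step. The chain rule yields the linear score ODE $\varepsilon'=-\lambda\varepsilon$, continuity of $S$ places every $\omega$-limit point on $\partial C_\alpha(x)$, and the integrable speed bound $\|y'(t)\|_2\le(\lambda/m)|S(y_0)-\tau_\alpha|e^{-\lambda t}$ gives finite arc length, Cauchy convergence, and the stated rate via the tail integral; your explicit handling of the case $\varepsilon(0)=0$ and your argument that $\varepsilon$ cannot vanish in finite time are small rigor refinements that the paper leaves implicit.
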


\begin{proof}
\textbf{Step 1 (score convergence).}
By the chain rule,
\[ \varepsilon'(t) = S'(y(t)) = \nabla S(y(t))^\top y'(t) 
= \nabla S(y(t))^\top v_\alpha(y(t)).
\]
Substituting the definition of $v_\alpha$ from \eqref{eq:ode_flow} yields
\[
\varepsilon'(t) = \nabla S(y(t))^\top 
\left( 
-\lambda\,\frac{S(y(t))-\tau_\alpha}{\|\nabla S(y(t))\|_2^2}\,\nabla S(y(t))
\right)
= -\lambda\,(S(y(t))-\tau_\alpha)
= -\lambda\,\varepsilon(t).
\]
Therefore $\varepsilon(t)=\varepsilon(0)e^{-\lambda t}$, which implies $S(y(t))\to\tau_\alpha$ as $t\to\infty$.

\textbf{Step 2 ($\omega$-limit points).}
Let $\bar y$ be any $\omega$-limit point of $(y(t))_{t\ge 0}$, i.e., there exists a sequence $t_k\to\infty$ such that $y(t_k)\to \bar y$. By continuity of $S$ and Step~1,
\[
S(\bar y) = \lim_{k\to\infty} S(y(t_k))
= \tau_\alpha,
\]
so $\bar y \in \partial C_\alpha(x)$.

\textbf{Step 3 (pointwise convergence).}
Assume there exist $T<\infty$ and $m>0$ such that $\|\nabla S(y(t))\|_2\ge m$ for all $t\ge T$. For $t\ge T$ we have
\[
\|y'(t)\|_2 = \|v_\alpha(y(t))\|_2
= \lambda\,\frac{|S(y(t))-\tau_\alpha|}{\|\nabla S(y(t))\|_2}
\le \frac{\lambda}{m}\,|S(y(t))-\tau_\alpha|.
\]
Using Step~1 again gives
\[
|S(y(t))-\tau_\alpha|
= |S(y_0)-\tau_\alpha|e^{-\lambda t},
\]
and hence
\[
\|y'(t)\|_2
\le \frac{\lambda}{m}\,|S(y_0)-\tau_\alpha|\,e^{-\lambda t},
\qquad t\ge T.
\]
It follows that
\[
\int_T^\infty \|y'(t)\|_2\,dt < \infty,
\]
so $(y(t))_{t\ge T}$ has finite arc length and is Cauchy. Therefore $y(t)$ converges to a limit $y_\infty\in\mathcal Y$. By Step~2, every $\omega$-limit point lies in $\partial C_\alpha(x)$, so necessarily $y_\infty\in\partial C_\alpha(x)$. Finally, for any $t\ge T$,
\[
\|y_\infty-y(t)\|_2
\le \int_t^\infty \|y'(u)\|_2\,du
\le \int_t^\infty
\frac{\lambda}{m}\,|S(y_0)-\tau_\alpha|\,e^{-\lambda u}\,du
=
\frac{1}{m}\,|S(y_0)-\tau_\alpha|\,e^{-\lambda t},
\]
which is the claimed rate bound.
\end{proof}

\subsection{Proof of Proposition~\ref{prop:cpd-calibration}}

We restate Proposition~\ref{prop:cpd-calibration} here: 
\begin{proposition}[Calibration]
Let $P_x^{\mathrm{CPD}}$ denote a CPD constructed from boundary measures
$\{\nu_{x, \alpha}\}_{\alpha \in (0,1)}$ and mixing measure $\pi$. Then, for every
$\alpha \in (0,1)$,
$ P_x^{\mathrm{CPD}}(C_\alpha(x)) = \pi\{\beta : \tau_\beta \le \tau_\alpha\} \ge \pi([\alpha,1)). $
Moreover, for every empirical conformal level
$\alpha_k\in\mathcal A_{\mathrm{conf}}$,
\[
P_x^{\mathrm{CPD}}(C_{\alpha_k}(x)) = \pi([\alpha_k,1)).
\]
In particular, if $\pi=\mathrm{Unif}(0,1)$, then
$P_x^{\mathrm{CPD}}(C_\alpha(x))\ge 1-\alpha$ for all $\alpha$, with equality on the empirical conformal grid $\mathcal A_{\mathrm{conf}}$.
\end{proposition}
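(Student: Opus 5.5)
The plan is to compute $P_x^{\mathrm{CPD}}(C_\alpha(x))$ directly from the mixture definition (Equation~\ref{eqn:conformal_distribution}). For fixed $\alpha$, write
\[
P_x^{\mathrm{CPD}}(C_\alpha(x)) = \int_0^1 \nu_{x,\beta}(C_\alpha(x))\,d\pi(\beta),
\]
which is well defined by the assumed measurability of $\beta\mapsto\nu_{x,\beta}(A)$. The key observation is that each $\nu_{x,\beta}$ is supported on $\partial C_\beta(x)=\{y:S(y)=\tau_\beta\}$. Every point of that support has score exactly $\tau_\beta$, so it lies in $C_\alpha(x)=\{S\le\tau_\alpha\}$ if and only if $\tau_\beta\le\tau_\alpha$. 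Hence $\nu_{x,\beta}(C_\alpha(x))=\mathbf 1\{\tau_\beta\le\tau_\alpha\}$ for every $\beta$, and integrating gives
\[
P_x^{\mathrm{CPD}}(C_\alpha(x))=\pi\{\beta:\tau_\beta\le\tau_\alpha\}.
\]
(Nonemptiness of $\partial C_\beta(x)$ is implicit in the assumption that $\nu_{x,\beta}$ is a probability measure on it.) The set $\{\beta:\tau_\beta\le\tau_\alpha\}$ is measurable because $\beta\mapsto\tau_\beta$ is monotone.

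Next I use the monotonicity of the threshold map. Since $\tau_\beta=S_{(k(\beta))}$ with $k(\beta)=\lceil(1-\beta)(n+1)\rceil$ nonincreasing in $\beta$, the map $\beta\mapsto\tau_\beta$ is nonincreasing. Thus $\beta\ge\alpha$ implies $\tau_\beta\le\tau_\alpha$, so $[\alpha,1)\subseteq\{\beta:\tau_\beta\le\tau_\alpha\}$ and the inequality $\ge\pi([\alpha,1))$ follows.

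For equality at a grid point $\alpha_k=1-k/(n+1)$, I must show the reverse inclusion $\{\beta:\tau_\beta\le\tau_{\alpha_k}\}\subseteq[\alpha_k,1)$, or at least that they differ by a $\pi$-null set. Take $\beta<\alpha_k$. Then $(1-\beta)(n+1)>k$, so $k(\beta)\ge k+1$ and $\tau_\beta=S_{(k(\beta))}\ge S_{(k+1)}$. The main obstacle is ties among the order statistics: if $S_{(k+1)}=S_{(k)}$, then $\tau_\beta=\tau_{\alpha_k}$ and the inclusion fails. I would handle this by adopting the standard assumption, implicit in the statement, that calibration scores are almost surely distinct (continuous score distribution), or equivalently by working with strictly increasing order statistics. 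Under that assumption $\tau_\beta>\tau_{\alpha_k}$ for every $\beta<\alpha_k$, so the sets coincide and $P_x^{\mathrm{CPD}}(C_{\alpha_k}(x))=\pi([\alpha_k,1))$. I would also settle the convention for $\beta$ with $k(\beta)>n$ (where $\tau_\beta=+\infty$, i.e.\ $C_\beta=\mathcal Y$). Those $\beta$ are $<\alpha_k$ and have $\tau_\beta>\tau_{\alpha_k}$, so they are consistently excluded.

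Finally, take $\pi=\mathrm{Unif}(0,1)$. Then $\pi([\alpha,1))=1-\alpha$, which gives $P_x^{\mathrm{CPD}}(C_\alpha(x))\ge1-\alpha$ for all $\alpha$, with equality $1-\alpha_k=k/(n+1)$ on $\mathcal A_{\mathrm{conf}}$.
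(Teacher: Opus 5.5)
Your proposal is correct and follows the paper's proof step for step. Both arguments use the pointwise identity $\nu_{x,\beta}(C_\alpha(x))=\mathbf 1\{\tau_\beta\le\tau_\alpha\}$, integrate it against $\pi$, invoke monotonicity of $\beta\mapsto\tau_\beta$ for the inequality, and assume no tied calibration scores to get exactness on $\mathcal A_{\mathrm{conf}}$. Your explicit order-statistic argument for the reverse inclusion, and your handling of the $\tau_\beta=+\infty$ convention, simply make precise what the paper states informally as ``the threshold changes exactly at $\alpha_k$.''
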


\begin{proof}
Let $\nu_{x, \beta}$ denote a boundary measure supported on $\partial C_\beta(x)$. Any draw $Y\sim\nu_{x, \beta}$ will satisfy $S(x,Y)=\tau_\beta$ almost surely by definition. Therefore, the measure of any set $C_\alpha(x)$ under $\nu_{x, \beta}$ is given by
\[
\nu_{x,\beta}(C_\alpha(x)) = \mathbf 1\{\tau_\beta \le\tau_\alpha\}.
\]
Integrating over $\beta \sim\pi$ gives
\[
P_x^{\mathrm{CPD}}(C_\alpha(x))
=
\int_0^1 \nu_{x,\beta}(C_\alpha(x)) d\pi(\beta)
=
\int_0^1 \mathbf 1\{\tau_\beta \le\tau_\alpha\}\,d\pi(\beta)
=
\pi\{\beta :\tau_\beta \le \tau_\alpha\}.
\]
Now, because $\tau_\beta$ is a nonincreasing function of $\beta$, we get that $\beta \ge \alpha$ implies $\tau_\beta \le \tau_\alpha$, and hence
\[
\pi\{\beta:\tau_\beta \le \tau_\alpha\}\ge \pi([\alpha,1)).
\]

At empirical conformal levels $\alpha_k\in\mathcal A_{\mathrm{conf}}$, the threshold changes exactly at $\alpha_k$, so
\[
\{\beta:\tau_\beta \le \tau_{\alpha_k}\} = [\alpha_k,1).
\]
up to endpoints assuming no tied scores, which trivially holds for any continuous score almost surely. Therefore, at the empirical conformal level sets we have exact $\pi$-calibration:
\[
P_x^{\mathrm{CPD}}(C_{\alpha_k}(x))
=
\pi([\alpha_k,1)).
\]
The uniform case follows immediately.
\end{proof}

\subsection{Proof of Proposition~\ref{prop:w2_cpd}} \label{proof:w2_cpd}

We restate Proposition~\ref{prop:w2_cpd} here:

\begin{proposition}[$W_2$ approximation bound]
Fix $x\in\mathcal X$ and let $P_x^{\mathrm{proj}} = \int_0^1(\mathrm{proj}_{x,\alpha})_\# P_x^\star \,d\alpha $ denote the distribution of $\mathrm{proj}_{x,\alpha}(Y)$ for $Y\sim P_x^\star = P(\cdot \mid x)$ and $\alpha\sim\mathrm{Unif}(0,1)$. Under the regularity conditions
stated in Appendix~\ref{proof:w2_cpd}, 
\[
W_2(P_x^\star, P_x^{\mathrm{CPD}})
\le
W_2(P_x^\star, P_x^{\mathrm{proj}})
+
(\mathbb E_\alpha L_\alpha^2)^{1/2}W_2(P_x^\star, \mu_x)
+
C\{\mathbb E_\alpha[\bar\kappa(\alpha)^2R_4(\alpha)]\}^{1/2},
\]
where $R_4(\alpha):=\mathbb E_{\mu_x}[\mathrm{dist}(y,\partial C_\alpha(x))^4]$,
$\bar\kappa(\alpha)$ controls the curvature of the score-gradient paths, and
$C>0$ is a geometric constant.
\end{proposition}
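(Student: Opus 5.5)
The plan is a triangle inequality in $W_2$ through two intermediate measures, with every piece controlled by an explicit coupling that shares the same $\alpha\sim\mathrm{Unif}(0,1)$.

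Introduce $P_x^{\mathrm{proj},\mu} := \int_0^1 (\mathrm{proj}_{x,\alpha})_\#\mu_x\,d\alpha$, the mixture obtained by projecting base samples (rather than true samples) onto the level sets. Then
\[
W_2(P_x^\star,P_x^{\mathrm{CPD}})\le W_2(P_x^\star,P_x^{\mathrm{proj}})+W_2(P_x^{\mathrm{proj}},P_x^{\mathrm{proj},\mu})+W_2(P_x^{\mathrm{proj},\mu},P_x^{\mathrm{CPD}}).
\]
The first term is kept as is.

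For the two remaining terms I use the standard mixture bound
\[
W_2^2\Bigl(\int\rho_\alpha\,d\alpha,\int\rho'_\alpha\,d\alpha\Bigr)\le\int W_2^2(\rho_\alpha,\rho'_\alpha)\,d\alpha ,
\]
which follows by gluing the per-$\alpha$ optimal couplings with a common $\alpha$. A measurable selection of these couplings is needed, and I state it as part of the regularity conditions.

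\textbf{Middle term.} Assume $\mathrm{proj}_{x,\alpha}$ is single-valued and $L_\alpha$-Lipschitz on the relevant region; this is the first regularity condition, plausible under a reach/positive-curvature-bound assumption on $\partial C_\alpha(x)$. Take an optimal coupling $(Y,Y_0)$ of $P_x^\star$ and $\mu_x$ and push both coordinates through $\mathrm{proj}_{x,\alpha}$. This gives $W_2^2\le L_\alpha^2W_2^2(P_x^\star,\mu_x)$. Integrating over $\alpha$ and taking square roots yields $(\mathbb E_\alpha L_\alpha^2)^{1/2}W_2(P_x^\star,\mu_x)$.

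\textbf{Last term.} Couple $\mathrm{proj}_{x,\alpha}(y_0)$ with $\Pi_{x,\alpha}(y_0)$ through the same $y_0\sim\mu_x$. It then suffices to prove the pointwise estimate
\[
\|\Pi_{x,\alpha}(y_0)-\mathrm{proj}_{x,\alpha}(y_0)\|\le C\,\bar\kappa(\alpha)\,\mathrm{dist}(y_0,\partial C_\alpha(x))^2 .
\]
Squaring, taking $\mathbb E_{\mu_x}$ to obtain $R_4(\alpha)$, and integrating over $\alpha$ then gives the third term.

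This pointwise estimate is the main obstacle. The approach I would try first is as follows.
\begin{enumerate}
\item By the minimum-norm construction (Appendix~\ref{apx:sec:derivation}), the flow $\Phi_\alpha$ is a time reparametrization of the normalized gradient line $\dot z=\pm\nabla S/\|\nabla S\|$. Its trajectory from $y_0$ therefore has the same endpoint $\Pi_{x,\alpha}(y_0)$, which exists by Proposition~\ref{prop:convergence}(2) under $\|\nabla S\|\ge m$.
\item Define $\bar\kappa(\alpha)$ as a uniform bound on the curvature $\|\frac{d}{ds}(\nabla S/\|\nabla S\|)\|$ of these paths in a tubular neighborhood of $\partial C_\alpha(x)$.
\item The segment from $y_0$ to $p=\mathrm{proj}_{x,\alpha}(y_0)$ is normal to the level set at $p$, hence tangent there to the gradient line through $p$. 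Compare the gradient line through $y_0$ with the straight line $y_0+s\,n(p)$.
\item A curve of length $\ell$ and curvature at most $\bar\kappa$ deviates from its tangent line by at most $\tfrac12\bar\kappa\ell^2$. Its length $\ell$ is comparable to $d=\mathrm{dist}(y_0,\partial C_\alpha(x))$ (at most $d/(1-\bar\kappa d)$), because the score changes monotonically along the path. Together these give an $O(\bar\kappa d^2)$ endpoint discrepancy.
\end{enumerate}
The constant $C$ absorbs the ratio between path length and distance and the Lipschitz constant of the unit normal near the level set.

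Points far from the boundary, where the tubular assumption fails, have to be handled by assumption: take $\mu_x$ supported in the tube, or fold the far-field contribution into $C$. I would list these conditions explicitly as the "regularity conditions" the statement refers to:
\begin{itemize}
\item uniqueness and Lipschitzness of the projection, with constant $L_\alpha$;
\item $\|\nabla S\|\ge m$ and $S\in C^2$ near $\partial C_\alpha(x)$;
\item bounded path curvature $\bar\kappa(\alpha)$;
\item finite fourth moments of $\mathrm{dist}(y,\partial C_\alpha(x))$ under $\mu_x$;
\item measurability of all maps in $\alpha$.
\end{itemize}
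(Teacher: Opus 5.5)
Your overall architecture matches the paper's. You use the same triangle inequality through $P_x^{\mathrm{proj},\mu}$ and the same Lipschitz-projection coupling for the middle term. You also use the same diagonal coupling $(\mathrm{proj}_{x,\alpha}(y_0),\Pi_{x,\alpha}(y_0))$, which reduces the last term to a pointwise $O(\bar\kappa(\alpha)\,d^2)$ estimate. No measurable selection is needed: one optimal coupling of $P_x^\star$ and $\mu_x$, or the identity coupling in $y_0$, serves all $\alpha$ at once.

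The genuine difference is in the pointwise estimate. The paper inserts the point $q_{x,\alpha}(y_0)$ where the \emph{initial} gradient ray $y_0+r\sigma n_x(y_0)$ first meets $\partial C_\alpha(x)$. It bounds $\|\Pi_{x,\alpha}(y_0)-q_{x,\alpha}(y_0)\|_2$ by integrating the direction drift, and simply \emph{assumes} $\|q_{x,\alpha}(y_0)-p\|_2\le\kappa_{\mathrm{proj}}(\alpha)d^2$. You instead compare the flow path with the projection line, whose point at parameter $d$ is exactly $p$. This buys two things. First, the $\kappa_{\mathrm{proj}}$ assumption becomes unnecessary. Second, you must handle the along-line mismatch $\ell-d$ explicitly, which the paper never does: its integration step controls only $\|\Pi_{x,\alpha}(y_0)-(y_0+\ell\sigma n_x(y_0))\|_2$, not the gap between $\ell$ and the ray parameter of $q_{x,\alpha}(y_0)$. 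The paper's route is shorter because the geometry is moved into the definition of $\kappa_{\mathrm{proj}}$.

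Two steps of your sketch need repair for this to go through.

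\textbf{Tangent line.} With $\sigma$ the flow sign, $p=y_0+d\,\sigma n_x(p)$. This line is not the tangent line of the gradient curve $\eta$ from $y_0$, which starts in direction $\sigma n_x(y_0)$, so your $\tfrac12\bar\kappa\ell^2$ tangent-deviation bound does not apply to it. Write instead
\[
\|\eta(\ell)-(y_0+\ell\sigma n_x(p))\|_2\le\int_0^\ell\|n_x(\eta(s))-n_x(p)\|_2\,ds\le\tfrac12\bar\kappa\,\ell^2+\ell\,\|n_x(y_0)-n_x(p)\|_2 .
\]
Then bound $\|n_x(y_0)-n_x(p)\|_2\le d\,\sup_{[y_0,p]}\|Dn_x\|_{\mathrm{op}}\le (H_\alpha/c)\,d$, using $Dn_x=(I-n_xn_x^\top)\nabla^2S/\|\nabla S\|_2$. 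This needs the full operator norm of $Dn_x$ on the projection segment, not just the path curvature $\|Dn_x n_x\|_2$. So it belongs in $\bar\kappa$ rather than in the constant; for example, take $\bar\kappa(\alpha)=H_\alpha/c$, which also bounds the path curvature.

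\textbf{Arclength.} You have $\|\Pi_{x,\alpha}(y_0)-p\|_2\le\|\eta(\ell)-(y_0+\ell\sigma n_x(p))\|_2+(\ell-d)$. The inequality $\ell\ge d$ is automatic, but $\ell-d=O(d^2)$ does not follow from monotonicity of the score alone. It does follow from the Hessian bound. With $g=\|\nabla S(y_0)\|_2\ge c$ and $u=(p-y_0)/d$,
\[
g\ell-\tfrac12H_\alpha\ell^2\le\int_0^\ell\|\nabla S(\eta(s))\|_2\,ds=|S(y_0)-\tau_\alpha|\le\int_0^d\|\nabla S(y_0+tu)\|_2\,dt\le gd+\tfrac12H_\alpha d^2 ,
\]
so $\ell-d\le\frac{H_\alpha}{2c}(\ell^2+d^2)$. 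Combined with $\ell\le(C_S/c)d$, this gives $\|\Pi_{x,\alpha}(y_0)-p\|_2\le C\,(H_\alpha/c)\,d^2$, with $C$ depending only on $C_S/c$. Hence, under the paper's gradient and Hessian bounds on flow paths and projection segments, you need neither $\kappa_{\mathrm{proj}}$ nor a tube condition on $\mu_x$.
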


\paragraph{Regularity conditions} Let $\mathrm{proj}_{x,\alpha}$ denote nearest-point projection operator onto $\partial C_\alpha(x)$. We will assume that projection onto $\partial C_\alpha(x)$ is unique $P_x^\star$- and $\mu_x$-almost surely and  that $\mathrm{proj}_{x,\alpha}$ is $L_\alpha$-Lipschitz on $\mathrm{supp}(P_x^\star)\cup\mathrm{supp}(\mu_x)$ up to sets of measure zero with $\mathbb{E}_\alpha L_\alpha^2 < \infty$.
We also assume that the score $S(x, \cdot) \in C^2$ and that the norm of its gradient and Hessian are bounded
\[
0<c\le \|\nabla_z S(x,z)\|_2\le C_S<\infty,
\qquad
\|\nabla_z^2S(x,z)\|_{\mathrm{op}}\le H_\alpha<\infty
\]
along the flow paths from $y_0\sim\mu_x$ to $\partial C_\alpha(x)$ and the projection segments from $y_0$ to $\mathrm{proj}_{x,\alpha}(y_0)$. Finally, if $q_{x,\alpha}(y_0)$ denotes the first intersection of the initial score gradient ray from $y_0$ with $\partial C_\alpha(x)$, we assume
\[
\kappa_{\mathrm{proj}}(\alpha)
:= \operatorname*{ess\,sup}_{y_0\sim\mu_x} 
\frac{\|q_{x,\alpha}(y_0)-\mathrm{proj}_{x,\alpha}(y_0)\|_2}
{\mathrm{dist}(y_0,\partial C_\alpha(x))^2}
< \infty,
\]
with $\kappa_{\mathrm{proj}}(\alpha) = 0$ when $\mathrm{dist}(y_0,\partial C_\alpha(x))=0$. In the proof we use $\bar\kappa(\alpha):=H_\alpha/c+\kappa_{\mathrm{proj}}(\alpha)$.


\begin{proof}
First, we define the pushforward distribution of the base measure $\mu_x$ under the projection operator $\mathrm{proj}_{x,\alpha}$ as
\[
P_x^{\mathrm{proj},\mu}:=\int_0^1(\mathrm{proj}_{x,\alpha})_\#\mu_x\,d\alpha .
\]
Then, by the triangle inequality we can decompose the prediction error $W_2(P_x^\star, P_x^{\mathrm{CPD}})$ as
\[
W_2(P_x^\star, P_x^{\mathrm{CPD}})
\le
W_2(P_x^\star, P_x^{\mathrm{proj}})
+
W_2(P_x^{\mathrm{proj}},P_x^{\mathrm{proj},\mu})
+
W_2(P_x^{\mathrm{proj},\mu},P_x^{\mathrm{CPD}}).
\]

\textbf{Term 1}: The first term, which we leave unchanged, is the cost of transporting the true conditional $P_x^\star$ to its projection onto the CPD class, denoted $P_x^{\mathrm{proj}}$. This is the unavoidable cost of imposing a score $S$ on the density of the data.

\textbf{Term 2}: The second term is, essentially, the cost of projecting onto the conformal filtration with an arbitrary base measure $\mu_x$, instead of the optimal base measure $P_x^\star$. We can further bound this term via the discrepancy between $\mu_x$ and $P_x^\star$ and the regularity of the level sets.

Let $\gamma$ be an optimal coupling of $P_x^\star$ and $\mu_x$. Drawing $(Y,Z)\sim\gamma$ and $\alpha\sim\mathrm{Unif}(0,1)$ independently gives the projection coupling
\[
(\mathrm{proj}_{x,\alpha}(Y),\mathrm{proj}_{x,\alpha}(Z))
\]
of $P_x^{\mathrm{proj}}$ and $P_x^{\mathrm{proj},\mu}$. Hence, because $\mathrm{proj}_{x,\alpha}$ is $L_\alpha$-Lipschitz on $\mathrm{supp}(P_x^\star) \cup \mathrm{supp}(\mu_x)$, we get
\[
W_2^2(P_x^{\mathrm{proj}},P_x^{\mathrm{proj},\mu})
\le
\int_0^1
\mathbb E_\gamma
\|\mathrm{proj}_{x,\alpha}(Y)-\mathrm{proj}_{x,\alpha}(Z)\|_2^2\,d\alpha
\le
\left(\int_0^1L_\alpha^2\,d\alpha\right)W_2^2(P_x^\star, \mu_x).
\]
since $\gamma$ is the optimal coupling between $P_x^\star$ and $\mu_x$. 

\textbf{Term 3}: The final term is the cost of projecting onto the conformal filtration using the nonconformity flow rather than the nearest-point projection operator $\mathrm{proj}_{x,\alpha}$. This term is controlled by two geometric effects: the bending of the score-gradient path and the mismatch between the initial score-gradient ray and the nearest-projection ray.

Let $y_0\sim\mu_x$ be a base-measure sample and let $\eta_{y_0,\alpha}$ denote the flow path from $y_0$ to $\Pi_{x,\alpha}(y_0)$. We define $p=\mathrm{proj}_{x,\alpha}(y_0)$ as the nearest-point projection onto $\partial C_\alpha(x)$ and define $d_\alpha(y_0)=\|y_0-p\|_2$ as the length of the projection segment. We also define the normalized score-gradient field
\[
n_x(z):=\frac{\nabla_z S(x,z)}{\|\nabla_z S(x,z)\|_2}.
\]
Let $\sigma\in\{-1,1\}$ denote the sign of the flow direction, so that the flow moves along $\sigma n_x(\cdot)$ toward the target level set. Let $q_{x,\alpha}(y_0)$ denote the first intersection of the initial score-gradient ray
\[
\{y_0+r\sigma n_x(y_0):r\ge 0\}
\]
with $\partial C_\alpha(x)$. Thus $q_{x,\alpha}(y_0)$ is the boundary point obtained by following the initial score-gradient direction without allowing the direction to bend.

We write
\[
\bar\kappa(\alpha):=\kappa_{\mathrm{flow}}(\alpha)+\kappa_{\mathrm{proj}}(\alpha),
\]
where $\kappa_{\mathrm{flow}}(\alpha)$ controls the curvature of the normalized score-gradient field along the flow paths and $\kappa_{\mathrm{proj}}(\alpha)$ controls the mismatch between the initial score-gradient ray and the nearest-projection ray. Specifically, these are defined as 
\[
\kappa_{\mathrm{flow}}(\alpha)
:=
\operatorname*{ess\,sup}_{y_0\sim\mu_x}
\sup_{z\in\eta_{y_0,\alpha}}
\left\|
D n_x(z)
\right\|_{\mathrm{op}},
\]
where $D$ denotes the Jacobian with respect to $z$, and from the regularity conditions
\[
\kappa_{\mathrm{proj}}(\alpha)
:=
\operatorname*{ess\,sup}_{y_0\sim\mu_x}
\frac{\|q_{x,\alpha}(y_0)-\mathrm{proj}_{x,\alpha}(y_0)\|_2}{d_\alpha(y_0)^2},
\]
with the ratio taken to be zero when $d_\alpha(y_0)=0$. This separates the curvature of the flow path from the error induced by using the initial score-gradient ray rather than exact nearest-point projection. $\kappa_{\mathrm{flow}}(\alpha)$ is finite by the assumption that $\|\nabla_z^2S(x,z)\|_{\mathrm{op}}\le H_\alpha<\infty$ and $\kappa_{\mathrm{proj}}(\alpha)$ is finite by the regularity condition.

We first show that the arclength of the flow path $\eta_{y_0,\alpha}$, denoted $\ell_{y_0,\alpha}$, is finite and controlled by $d_\alpha$. Let $y(t)$ denote the flow trajectory initialized at $y(0)=y_0$. By the same argument as in Proposition~\ref{prop:convergence}, the lower gradient bound along the flow path, $c \le \|\nabla_z S(x,z)\|_2$, gives
\[
\ell_{y_0,\alpha}
\le
\frac{1}{c}|S(x,y_0)-\tau_\alpha|.
\]
Since $p=\mathrm{proj}_{x,\alpha}(y_0)$ is a projection onto $\partial C_\alpha(x)$, then by definition we have that $S(x,p)=\tau_\alpha$. Therefore, because the upper gradient bound, $\|\nabla_z S(x,z)\|_2\le C_S$ holds on the projection segment from $y_0$ to $p$, we get by mean value theorem that 
\[
|S(x,y_0)-\tau_\alpha|
=
|S(x,y_0)-S(x,p)|
\le
C_S\|y_0-p\|_2
=
C_Sd_\alpha(y_0).
\]
Thus the arclength of the flow path $\eta_{y_0,\alpha}$ is finite and bounded as 
\[
\ell_{y_0,\alpha}\le (C_S/c)d_\alpha(y_0).
\]

Given a finite and well controlled path length, we can meaningfully bound the distance between $\Pi_{x,\alpha}(y_0)$ and $p=\mathrm{proj}_{x,\alpha}(y_0)$. We insert the initial-ray endpoint $q_{x,\alpha}(y_0)$ and apply the triangle inequality to get 
\[
\|\Pi_{x,\alpha}(y_0)-p\|_2
\le
\|\Pi_{x,\alpha}(y_0)-q_{x,\alpha}(y_0)\|_2
+
\|q_{x,\alpha}(y_0)-p\|_2.
\]
For the first term, we parametrize $\eta_{y_0,\alpha}$ by arclength as $\eta(s)$, $0\le s\le\ell_{y_0,\alpha}$. Because the sign of $S(x,y(t))-\tau_\alpha$ is fixed until the path reaches the target level set, we can express the gradient with respect to the arclength as
\[
\eta'(s)=\sigma n_x(\eta(s)).
\]
Differentiating with respect to arclength gives
\[
\eta''(s)=\sigma Dn_x(\eta(s))\eta'(s),
\]
where $D$ is, again, the Jacobian operator. By the induced operator norm inequality and the definition of $\kappa_{\mathrm{flow}}(\alpha)$,
\[
\|\eta''(s)\|_2
\le
\|Dn_x(\eta(s))\|_{\mathrm{op}}\|\eta'(s)\|_2
\le
\kappa_{\mathrm{flow}}(\alpha),
\]
since $\|\eta'(s)\|_2=1$ by definition. Therefore,
\[
\|\eta'(s)-\eta'(0)\|_2
\le
\int_0^s\|\eta''(r)\|_2\,dr
\le
\kappa_{\mathrm{flow}}(\alpha)s.
\]
This means that, after arclength $s$, the flow direction can differ from the initial score-gradient direction by at most $\kappa_{\mathrm{flow}}(\alpha)s$. Integrating this directional deviation along the path gives
\[
\|\Pi_{x,\alpha}(y_0)-q_{x,\alpha}(y_0)\|_2
\le
C\int_0^{\ell_{y_0,\alpha}}\kappa_{\mathrm{flow}}(\alpha)s\,ds
\le
C\kappa_{\mathrm{flow}}(\alpha)\ell_{y_0,\alpha}^2.
\]
The second term is controlled directly by the definition of $\kappa_{\mathrm{proj}}(\alpha)$:
\[
\|q_{x,\alpha}(y_0)-p\|_2
\le
\kappa_{\mathrm{proj}}(\alpha)d_\alpha(y_0)^2.
\]
Combining the two bounds and using $\ell_{y_0,\alpha}\le (C_S/c)d_\alpha(y_0)$ gives
\[
\|\Pi_{x,\alpha}(y_0)-\mathrm{proj}_{x,\alpha}(y_0)\|_2
\le
C\{\kappa_{\mathrm{flow}}(\alpha)+\kappa_{\mathrm{proj}}(\alpha)\}d_\alpha(y_0)^2
=
C\bar\kappa(\alpha)d_\alpha(y_0)^2,
\]
where $C$ absorbs the regularity constants, including $C_S/c$.

Finally, to bound $W_2^2(P_x^{\mathrm{proj},\mu},P_x^{\mathrm{CPD}})$, sample $Y\sim\mu_x$ and $\alpha\sim\mathrm{Unif}(0,1)$ independently so that 
$
(\mathrm{proj}_{x,\alpha}(Y),\Pi_{x,\alpha}(Y))
$
is a coupling of $P_x^{\mathrm{proj},\mu}$ and $P_x^{\mathrm{CPD}}$.  Since $W_2$ is defined by the optimal coupling, evaluating it under this particular coupling gives an upper bound
\[
W_2^2(P_x^{\mathrm{proj},\mu},P_x^{\mathrm{CPD}})
\le
\int_0^1
\mathbb E_{\mu_x}
\|\mathrm{proj}_{x,\alpha}(Y)-\Pi_{x,\alpha}(Y)\|_2^2\,d\alpha
\le
C^2
\int_0^1
\bar\kappa(\alpha)^2R_4(\alpha)\,d\alpha .
\]
Taking square roots gives the bound for the third term.
\end{proof}

\section{Risk--controlling Prediction Bands} \label{sec:risk_control}

Sampling from conformal prediction set boundaries provides a geometric representation of admissible predictions at a fixed confidence level. However, boundary samples obtained via the nonconformity flow do not yield pointwise calibrated prediction bands, i.e. coordinate-wise envelopes of these samples do not necessarily achieve pointwise coverage. We show how pointwise risk-controlling prediction bands can be obtained from boundary samples via a reconformalization step.

Using the nonconformity flow $\Phi(\cdot)$ (Equation \ref{eqn:flow}), we generate samples $y^{(1)}, \ldots, y^{(M)}$ lying on (or arbitrarily close to) the boundary set $\partial C_\alpha(x) = \{y \in \mathcal{Y} : S(x, y) = \tau_\alpha \}$. From these samples, we form a preliminary coordinate-wise envelope $[\ell_0(x),u_0(x)]$ by taking pointwise empirical quantiles (e.g., $\alpha/2$ and $1-\alpha/2$) across the sampled functions. This envelope reflects the geometry of the conformal set but does not, by itself, satisfy finite-sample pointwise coverage, or even pointwise risk control \citep{bates2021distribution, ma2024calibrated}.

To obtain a prediction band with pointwise risk control, we introduce a scalar inflation parameter $\eta \geq 0$ and define the prediction band $\mathcal{B}_\eta(x) =\bigl\{y:\ell_0(x)-\eta \le y \le u_0(x)+\eta \bigr\}$. We evaluate pointwise coverage through the risk functional 
\[
\ell_\eta(x,y) =\frac{1}{p}\sum_{j=1}^p \mathbf{1}\!\left\{y_j \notin \bigl[\ell_0(x)_j-\eta,\ u_0(x)_j+\eta\bigr]\right\},
\]
which measures the fraction of $p$ discretization points outside the band. For a prescribed tolerance $\delta \in (0,1)$ and confidence level $1-\alpha$, we say $\mathcal{B}_\eta$ controls risk at level $(\delta, \alpha)$ if
\begin{equation}\label{eqn:pointwise_coverage}
P(\ell_\eta(X,Y)\le \delta \bigr)\ge 1-\alpha.
\end{equation}
Since $\ell_\eta(x,y)$ is nonincreasing in $\eta$, we set $\eta$ as the smallest value achieving $(\delta, \alpha)$-risk control (Equation \ref{eqn:pointwise_coverage}) on held-out calibration data; in all experiments we use $\delta = \alpha$. Because the data are exchangeable, this yields a prediction band $\mathcal{B}_\eta$ with finite-sample risk control \citep{bates2021distribution} as long as we use a second calibration split to compute $\eta$. If calibration data is highly limited, we note that the dependence between $\tau_\alpha$ and $\eta$ is $O(1/n)$ so the calibration can be re-used with little practical impact on the test coverage (Table \ref{tab:risk_control}), although the strict guarantee will no longer hold.
Asymmetric reconformalization is also possible by setting the upper and lower inflation term separately.

\paragraph{Risk Controlling Bands}
\begin{table}
    \caption{\textbf{Top:} marginal coverage ($\alpha = 0.1$). \textbf{Bottom:} median prediction set width.}
    \centering
    \begin{tabular}{lccccc}
        \toprule
              & Symm. & Asym. & $\Delta \mu$ & $\Delta (\mu, \sigma)$ \\
        \midrule
         Sample & 0.894 & 0.646 & 0.828 & 0.876 \\
         Re-conf & 0.901 & 0.900 & 0.900 & 0.901 \\
         RCPS & 0.900 & 0.901 & 0.901 & 0.902 \\
         UQNO & 0.901 & 0.902 & 0.906 & 0.907 \\
         \midrule
         Sample & 2.443 & 2.032 & 1.145 & 4.269 \\
         Re-conf & 2.479 & 2.948 & 1.302 & 4.465 \\
         RCPS & 2.460 & 3.487 & 2.451 & 5.715 \\
         UQNO & 2.467 & 2.982 & 2.348 & 5.772 \\
        \bottomrule
    \end{tabular}
    \label{tab:risk_control}
\end{table}
Table \ref{tab:risk_control} shows the marginal coverage ($\alpha = 0.1$) and median prediction set width under four 1D functional regression settings Appendix \ref{apx:sec:data}. In the symmetric case, all methods perform the same. For asymmetric and biased data, both mean biased ($\Delta \mu$) and mean+variance biased ($\Delta (\mu, \sigma)$), the nonconformity flow adapts while alternative methods, RCPS \citep{bates2021distribution} and UQNO \citep{ma2024calibrated}, do not. This is due to our approach conformalizing in the native space of the data, rather than their graphs, thus better reflecting its geometric properties. Representative bands are shown in Appendix \ref{apx:sec:simulations}.

\section{Tangent Repulsion for Boundary Exploration}
\label{sec:tangent-repulsion}


\begin{wrapfigure}[14]{r}{0.4\columnwidth}
    \vspace{-1.2em}
    \centering
    \includegraphics[width=0.4\textwidth]{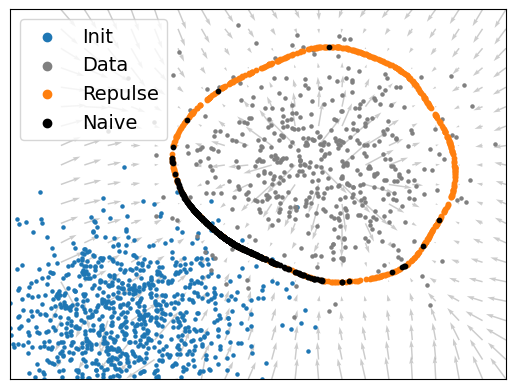}
  \caption{Repulsive boundary exploration under a $k$NN score (Section \ref{apx:sec:vec_scores}).}
\label{fig:repulsion}
\end{wrapfigure}
Because the flow $\Phi_\alpha(t,y_0)$ is everywhere co-linear with $\nabla S$, the induced boundary measures $\{\nu_{x,\alpha}\}_{\alpha\in(0,1)}$ are, in effect, projections of the base measure $\mu_x$ onto the level sets $\{\partial C\alpha(x)\}_{\alpha\in(0,1)}$ along integral curves of $\nabla S$. Consequently, not all boundary points are reachable from a given base measure: trajectories are attracted to nearby regions of the boundary and may never explore distant regions (Figure \ref{fig:repulsion}).
This behavior is partially desirable. For instance, when the base measure $\mu_x$ is concentrated near the data manifold (e.g., calibration data), then $P_x^{\mathrm{CPD}}$ is also close to the manifold. However, this also induces a systematic bias:  trajectories tend to accumulate in geometrically flat, nearby regions of the boundary, leading to poor exploration of $\partial C_\alpha(x)$.

To mitigate this effect, we introduce a deterministic tangent repulsion velocity field that spreads trajectories along the level set while preserving the score constraint. The minimum-norm controller used to drive trajectories to the score level set $\partial C_\alpha(x)$ enforces a prescribed scalar ODE for $S(y(t))$, but does not uniquely specify the velocity field. In particular, any velocity of the form
\begin{equation}
y'(t) = v_{\mathrm{n}}(y(t)) + v_{\mathrm{t}}(y(t)), \qquad \nabla S(y)^\top v_{\mathrm{t}}(y)=0,
\label{eq:tangent-decomp}
\end{equation}
induces the same evolution of $S(y(t))$ as the minimum-norm flow (Equation \ref{eqn:frontier_velocity}), since $v_{\mathrm{t}}$ lies in the tangent space of $\partial C_\alpha(x)$. 
  \begin{algorithm}[H]
   \caption{Tangent Repulsion}
   \label{alg:repulsion}
   \begin{algorithmic}
   \STATE \textbf{Input:} Boundary points $\{y_i(t)\}_{i=1}^B$, step size $\delta$
   \FOR{$k=1$ to $K$}
      \STATE 1. Compute $R_i(t) = \sum_{j\neq i}(y_i-y_j)/(\|y_i-y_j\|_2^2 + \epsilon)$ and $g_i = \nabla S(y_i)$
      \STATE 2. Compute $v_{\mathrm{t}}(y_i) = (I-g_i g_i^\top/\|g_i\|^2)R_i$
      \STATE 3. Step $y_i \leftarrow y_i + \delta\, v_{\mathrm{t}}(y_i)$
      \STATE 4. Correct $y_i \leftarrow \lim_{t \rightarrow \infty} \Phi_\alpha(t, y_i)$
   \ENDFOR
   \STATE \textbf{Return:} Equalized boundary points $\{y_i(t)\}_{i=1}^B$
   \end{algorithmic}
  \end{algorithm}

\section{Additional Simulation} \label{apx:sec:simulations}

Here we provide a few additional simulations to support some of our claims in the manuscript. These include computational scaling which compares the runtime of our CPD approach against the baseline models, a demonstration of sample diversity at fixed alpha levels, how to convert our conformal prediction sets into risk controlling bands, and the effects of repulsion sampling.

\subsection{Computational Scaling} \label{apx:sec:scaling}

We measure inference time to generate 100 predictive samples on 1D Gaussian process regression tasks (Appendix \ref{apx:sec:data}) as the output dimension $p$ varies from 32 to 2048. CPD uses a global $\ell_2$ score with 20 forward Euler steps. All neural baselines use 1D convolutional architectures with width 64 and depth 4. Models are not trained since inference cost is independent of the parameter values. 

\begin{figure}[h]
\centering
\includegraphics[width=\textwidth]{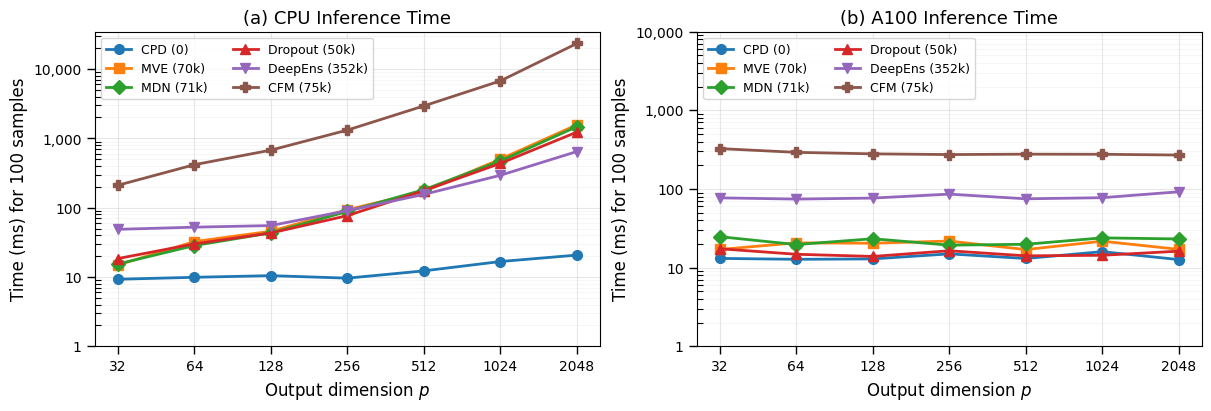}
\caption{Inference time for each baseline method and the CPD using a global $\ell_2$ score.}
\label{fig:runtime}
\end{figure}

Figure \ref{fig:runtime} shows the log growth of inference time for each method to draw 100 samples on either a standard CPU (panel a) or an A100 GPU (panel b). CPU inference time demonstrates that, as claimed, sampling from the CPD scales very well with dimension. In fact, its run time is nearly constant until $p \geq 512$. Other methods have comparable runtimes at $p = 32$, but scale dramatically worse than the CPD. This is because the CPD does not require any additional forward passes from the underlying model and only needs to evaluate score gradients. More complex scores may change the constant, but will not change the scaling rate as long as they are based on distance. The second panel shows runtime on an A100 GPU. Here the GPU is large enough to run each method fully in parallel for 100 samples. In this case, all methods have constant runtime, however CPD has the lowest average.

\subsection{Admissible Representations}

\begin{wrapfigure}[15]{r}{0.45\columnwidth}
    \includegraphics[width=0.45\textwidth]{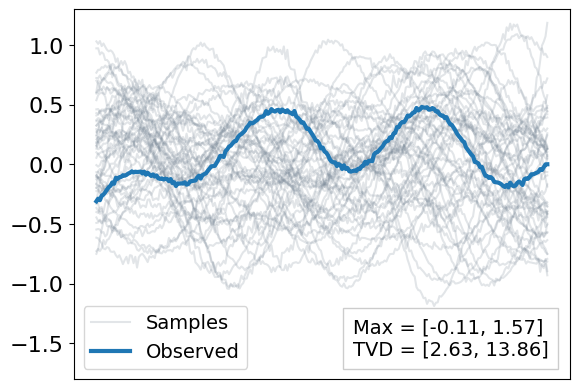}
  \caption{Sample realizations from a fixed $\alpha$-level showing considerable heterogeneity.}
\label{fig:diversity}
\end{wrapfigure}
Figure \ref{fig:diversity} shows multiple realizations from the $\alpha = 0.1$ level set of an $\ell_2$ score applied to 1D functional regression task. There is significant heterogeneity within these samples, showing that, even for high-dimensional structured regression problems, the flow sampler does not collapse to a degenerate mode. In fact, these samples exhibit quantitatively and qualitatively different behavior compared to the original $(1 - \alpha) \times 100\%$ most outlying function. Hence, sampling exposes the geometry and internal variability of conformal prediction sets, which is invisible to classical set representations. 

These samples characterize uncertainty over admissible predictions at a fixed confidence level. We use this feature in Section \ref{sec:distribution_quality} to explore potential realizations of hurricane trajectories over targeted confidence ranges.

\subsection{Risk Controlling bands}
Figure \ref{fig:reconfomalize} shows the upper band for each method on each setting, showing how re-conformalization adapts to asymmetry, mean biases, and heteroskedasticity across the domain. Thus, even when the underlying model is heavily misspecified, CPD provides tight, exactly controlled, prediction bands. The four panels correspond to the four columns in Table \ref{tab:risk_control}, with the exact data generation defined in Section \ref{apx:sec:data} under 1D Gaussian processes.

\begin{figure}[h]
\centering
\includegraphics[width=\textwidth]{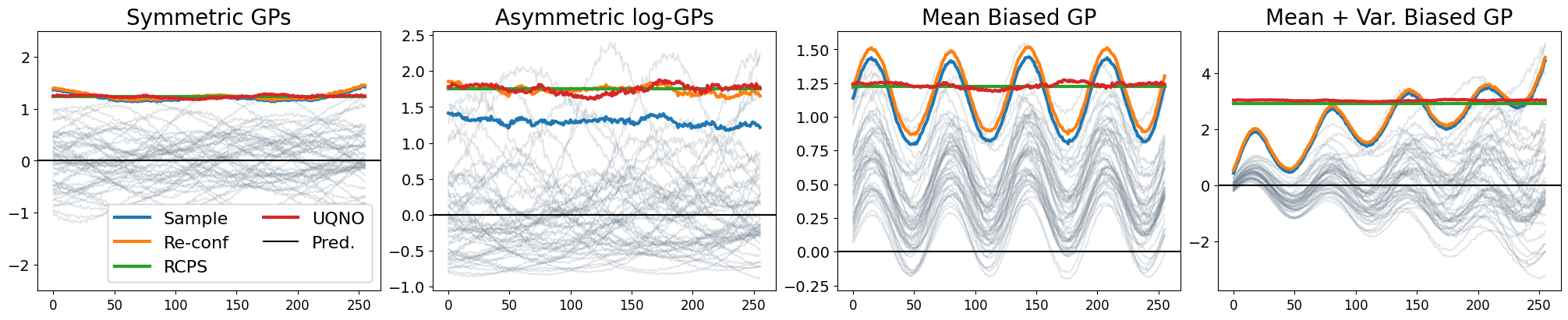}
\caption{Sampled prediction bands (Sample), Reconformalized sampled bands (Re-conf), and RCPS bands from each scenario in Table \ref{tab:risk_control}. Samples provide the shape, reconformalization fixes the width.}
\label{fig:reconfomalize}
\end{figure}

\subsection{Naive v.s. Repulsive sampling}
\begin{wraptable}[6]{r}{0.4\columnwidth}
    \vspace{-1.5\baselineskip}
    \caption{Avg. nearest neighbor distance}
    \centering
    \begin{tabular}{lccccc}
        \toprule
         Dim. $p = $ & $10$ & $25$ & $50$ & $100$ \\
        \midrule
         Naive & 0.624 & 0.964 & 1.124 & 1.225  \\
        	Repulse & 2.150 & 4.604 & 6.296 & 7.591 \\
        \bottomrule
    \end{tabular}
    \label{tab:repulsion}
\end{wraptable}
Table \ref{tab:repulsion} shows the average smallest pairwise Euclidean distance between boundary samples ($n = 1000$) under naive sampling (Alg \ref{alg:sampling}) and repulsive sampling (Alg. \ref{alg:repulsion}) at ($\alpha = 0.1$) on the isotropic gaussian task with and an $\ell_2$ score. Base measure is $N(p^{1/2}, I_p)$ to induce non-uniform sampling. $K = 50$ repulsive steps at $\delta = 0.1$ increases spread by 4-5$\times$ across $p$. Results averaged over 25 simulations.

\subsection{Bootstrap Standard Errors}

Table \ref{tab:baselines} reports Energy Distance, LSD, and MMD scores over a single test set. Here we report the bootstrapped standard errors in Tables \ref{tab:baselines_group1} and \ref{tab:baselines_group2}.

\begin{table}[h]
    \caption{Bootstrap standard errors for Table \ref{tab:baselines} on the comparison of predictive distribution and conformal predictive distribution (CPD) baselines.}
    \centering
    \small
    \setlength{\tabcolsep}{4pt}
    \begin{tabular}{l ccc ccc ccc ccc}
    \toprule
    	 & \multicolumn{3}{c}{\bf GP Regression} &
	     \multicolumn{3}{c}{\bf Elliptic PDE Inv.}  & 
	     \multicolumn{3}{c}{\bf Navier Stokes}  \\
	     \cmidrule(lr){2-4} 
	     \cmidrule(lr){5-7}
	     \cmidrule(lr){8-10}
    Method		& ED  & LSD & MMD  & ED  & LSD & MMD  & ED  & LSD & MMD   \\
    \midrule
    CPD-G 		& 0.0023  & 0.0006 & 0.0011 & 0.0001& 0.0004  & 0.0007 & 0.0047 & 0.0091 & 0.0014 \\
    CPD-L 		& 0.0023  & 0.0006 & 0.0014 & 0.0001 & 0.0004 & 0.0007 & 0.0050 & 0.0015 & 0.0013 \\ 
    Drop.         	& 0.0029  & 0.0010 & 0.0005 & 0.0006 & 0.0015  & 0.0005 & 0.0064 & 0.0035 & 0.0030 \\
    D. Ens.		& 0.0022  & 0.0009 & 0.0010 & 0.0007 & 0.0013  & 0.0010 & 0.0056 & 0.0034 & 0.0015 \\
    IQN             	& 0.0027  & 0.0010 & 0.0010 & 0.0008 & 0.0017  & 0.0011 & 0.0058 & 0.0022 & 0.0030 \\
    Flow              & 0.0031 & 0.0020 & 0.0006 & 0.0006 & 0.0019  & 0.0008 & 0.0051 & 0.0019 & 0.0013 \\
    MDN         	& 0.0031  & 0.0067 & 0.0009 & 0.0010 & 0.0091 & 0.0011 & 0.0051 & 0.0065 & 0.0026 \\ 
    MVE		& 0.0031  & 0.0061 & 0.0008 & 0.0010 & 0.0990  & 0.0003 & 0.0051 & 0.0059 & 0.0026 \\
    \bottomrule
    \end{tabular}
    \label{tab:baselines_group1}
\end{table}

\begin{table}[h]
    \caption{Bootstrap standard errors for Table \ref{tab:baselines} on the comparison of predictive distribution and conformal predictive distribution (CPD) baselines.}
    \centering
    \small
    \setlength{\tabcolsep}{4pt}
    \begin{tabular}{l ccc ccc ccc ccc ccc ccc}
    \toprule
    	 &  \multicolumn{3}{c}{\bf Precip. Downscale} & 
	     \multicolumn{3}{c}{\bf Climate Debias} \\
	     \cmidrule(lr){2-4} 
	     \cmidrule(lr){5-7}
    Method		& ED  & LSD & MMD & ED  & LSD & MMD  \\
    \midrule
    CPD-G 		& 0.0064 & 0.0018 & 0.0009 & 0.0019 & 0.0001 & 0.0014  \\
    CPD-L 		 & 0.0072 & 0.0015 & 0.0008 & 0.0022 & 0.0002 & 0.0013 \\
    Drop.         	 & 0.0070 & 0.0020 & 0.0012 & 0.0010 & 0.0001 & 0.0009 \\
    D. Ens.		 & 0.0070 & 0.0024 & 0.0014 & 0.0014 & 0.0001 & 0.0011 \\
    IQN             	 & 0.0062 & 0.0023 & 0.0013 & 0.0015 & 0.0001 & 0.0014 \\
    Flow              & 0.0057 & 0.0021 & 0.0006 & 0.0014 & 0.0001 & 0.0007 \\
    MDN         	&  0.0052 & 0.0032 & 0.0010 & 0.0011 & 0.0001 & 0.0011 \\
    MVE		&  0.0044 & 0.0027 & 0.0012 & 0.0018 & 0.0004 &  0.0012 \\
    \bottomrule
    \end{tabular}
    \label{tab:baselines_group2}
\end{table}

\section{Simulation Details} \label{apx:sec:details}
\subsection{Datasets and processing} \label{apx:sec:data}

\paragraph{Figure \ref{fig:convergence}: Isotropic Gaussian:} 
We generate linear regression data with Gaussian covariates and noise. For each ambient dimension $p\in\{10,20,\dots,100\}$, we sample a random coefficient matrix $\Theta\in\mathbb{R}^{p\times p}$ with independent standard normal entries. Training, calibration, and test inputs $X\in\mathbb{R}^{n\times p}$ are drawn with i.i.d.\ standard normal rows, and outputs are generated according to
\[
Y = X\Theta + \varepsilon,
\]
where $\varepsilon\in\mathbb{R}^{n\times p}$ has i.i.d.\ standard normal entries. We use $n=1000$ samples per split, with each split generated independently under the same specification.

\paragraph{Figure \ref{fig:convergence}: Anisotropic t-Dist.:} 
For the heavy–tailed vector experiments, we use the same linear regression setup as above but replace the Gaussian noise with a heteroskedastic Student-$t$ distribution. For each dimension $p\in\{10,20,\dots,100\}$, we draw a random coefficient matrix $\Theta\in\mathbb{R}^{p\times p}$ with i.i.d.\ standard normal entries and sample inputs $X\in\mathbb{R}^{n\times p}$ with independent standard normal rows. Outputs are generated as
\[
Y = X\Theta + \varepsilon,
\]
where the noise term has independent components $\varepsilon_{ij}\sim t_{3}\,\sigma_j$ with degrees of freedom $3$ and dimension–dependent scales $\sigma_j=\exp(\alpha_j)$, where $\alpha_j$ increases linearly from $0$ to $1.5$ across coordinates. We use $n=1000$ samples per split, with training, calibration, and test sets generated independently under the same specification.

\paragraph{Figure \ref{fig:convergence}: 2D Gaussian Processes:} 
We generate paired 2D functional data $(X,Y)$ on a spatial grid using a Gaussian process generator with spatial correlation but no temporal dependence. The spatial domain $\mathcal U\subset[0,1]\times[0,1]$ discretized on a regular $p\times p$ lattice with $p=64$. Input fields $X_t$ are drawn as zero–mean Gaussian random fields with separable squared exponential (RBF) covariance and length–scale $\ell_x=0.15$ along both spatial axes, with a small jitter term for numerical stability. Output fields $Y_t$ are generated according to
\[
Y_t(u) = \beta\,X_t(u) + \varepsilon_t(u),
\]
where $\beta=0.6$ and $\varepsilon_t(u)$ is an independent zero–mean Gaussian random field with RBF covariance and length–scale $\ell_y$, which is varied in Figure \ref{fig:convergence} and set to $\ell_y = 0.08$ in Table \ref{tab:baselines}. Samples are i.i.d.\ across $t$, ensuring exchangeability. Training, calibration, and test splits each consist of $n=500$ independently generated realizations under identical settings with different random seeds.

\paragraph{Figure \ref{fig:convergence}: 2D Downscaling:} 
For the 2D downsampling experiment, we use the same Gaussian process generator and data–generation settings as above, including the spatial grid, kernel parameters, and i.i.d.\ sampling across realizations. The difference lies only in the construction of the input–output pairs. Specifically, the high–resolution GP field is treated as the target $Y_t\in\mathbb{R}^{p\times p}$, while the input $X_t$ is obtained by spatially downsampling $Y_t$ to a coarse $q\times q$ grid with $q=8$ using bilinear interpolation. Models are trained to map the low–resolution input field back to the high–resolution target. Training, calibration, and test splits are generated independently under identical settings with  $n=500$ each.

\paragraph{Table \ref{tab:risk_control}: 1D Gaussian processes:}
We generate paired functional data $(X,Y)$ on a one–dimensional spatial grid using the same Gaussian process generator. The domain $\mathcal U\subset[0,1]$ is discretized into $p = 256$ equispaced points. Input fields $X_t$ are sampled as zero–mean Gaussian processes with squared exponential covariance and length–scale $\ell_x=0.2$, while outputs are constructed as
\[
Y_t(u) = \beta\,X_t(u) + \varepsilon_t(u),
\]
with $\beta=0.6$ and $\varepsilon_t$ an independent Gaussian process with RBF covariance and length–scale $\ell_y=0.2$. All samples are independent across $t$. All datasets are generated under identical settings ($n = 500$) with different random seeds for training, calibration, and testing.

\textbf{Asym.}: We center $Y$ across examples at each spatial location, exponentiate pointwise, and then recenter globally,
\[
\tilde Y \;=\; \exp\!\Big(Y - \tfrac{1}{n}\sum_{t=1}^n Y_t\Big)\;-\;\frac{1}{np}\sum_{t=1}^n\sum_{i=1}^p \exp\!\Big(Y_t(u_i)-\tfrac{1}{n}\sum_{s=1}^n Y_s(u_i)\Big),
\]
leaving $X$ unchanged. We use $(X_t,\tilde Y_t)$ in place of $(X_t,Y_t)$ for the asymmetric band experiments.

\textbf{$\Delta \mu$ Bias}: For the training set, the generated outputs are first centered pointwise across samples, exponentiated with reduced amplitude, and globally recentered,
\[
Y_t^{\text{train}} \;=\; \exp\!\big((Y_t-\bar Y)/2\big)\;-\;\frac{1}{n}\sum_{s=1}^n \exp\!\big((Y_s-\bar Y)/2\big),
\]
where $\bar Y(u)=\tfrac{1}{n}\sum_{s=1}^n Y_s(u)$.  For the calibration and test sets, the same transformation is applied but with an additional deterministic mean shift,
\[
Y_t^{\text{cal/test}} \;=\; f_{\text{mean}} \;+\; \exp\!\big((Y_t-\bar Y)/2\big)\;-\;\frac{1}{n}\sum_{s=1}^n \exp\!\big((Y_s-\bar Y)/2\big),
\]
where $f_{\text{mean}}(u) \;=\; 0.5 + 0.25\,\sin(4\pi(2u-1))$ for  $u\in\mathcal U.$
Inputs $X_t$ are left unchanged. This construction preserves exchangeability within each split while inducing a systematic mean mismatch between training and calibration/test, creating a mild but structured form of model bias.

\textbf{$\Delta (\mu, \sigma)$ Bias}: 
As in $\Delta \mu$ Bias, we begin with the same baseline 1D GP data and apply deterministic output transformations to induce model bias, while leaving the inputs unchanged. The training outputs are transformed identically to the $\Delta \mu$ Bias setting,
\[
Y_t^{\text{train}} \;=\; \exp\!\big((Y_t-\bar Y)/2\big)\;-\;\frac{1}{n}\sum_{s=1}^n \exp\!\big((Y_s-\bar Y)/2\big),
\]
with $\bar Y(u)=\tfrac{1}{n}\sum_{s=1}^n Y_s(u)$. For calibration and test, the same nonlinear transformation is applied, followed by both a deterministic mean shift and a spatially varying scale change,
\[
Y_t^{\text{cal/test}} \;=\; f_{\text{mean}} \;+\; \sigma(u)\,Y_t^{\ast},
\]
where $Y_t^{\ast}$ denotes the centered–exponentiated field above, and
\[
f_{\text{mean}}(u) \;=\; 0.5\,\sin(4\pi(2u-1)), 
\qquad
\sigma_j \;=\; 0.5\,\sigma_j^*
\]
where $\sigma_1^*,\ldots,\sigma_p^*$ are $p$ uniformly spaced values from $1$ to $4\pi$.
This construction induces both a structured mean shift and a spatially varying variance shift between training and calibration/test, creating a stronger form of distribution shift than Bias~1 while maintaining exchangeability within each split.

\paragraph{Table \ref{tab:baselines}: 2D Gaussian processes:} We use the same Gaussian process generator and data–generation settings as for Figure \ref{fig:convergence}.

\paragraph{Table \ref{tab:baselines}: Elliptic PDE Inv.:} 
We generate data for a 2D elliptic PDE forward problem on a square interior grid with $p\times p$ points, using $p=32$ and homogeneous Dirichlet (zero) boundary conditions enforced by zero padding. For each realization $i\in\{1,\dots,n\}$ with $n=500$, we sample a forcing field $f_i$ and a spatially varying coefficient field $a_i$, and solve
\[
-\nabla\cdot(a_i\,\nabla u_i) \;=\; f_i
\]
using finite differences and conjugate gradients with tolerance $10^{-6}$ (maximum $500$ iterations). The coefficient field $a_i$ is a smooth log-normal random field obtained by spectral filtering of white noise and exponentiation, while the forcing $f_i$ is a band-limited Gaussian field constructed by masking Fourier modes with $\|k\|^2\le k_{\max}^2$ (with $k_{\max}=8$), inverse transforming, and rescaling to have standard deviation $0.1$. The resulting solution fields $\{u_i\}$ are standardized across the dataset to have unit marginal variance. We treat the pairs $(f_i,u_i)$ as independent samples; training, calibration, and test splits are generated under identical settings with independent random seeds.

\paragraph{Table \ref{tab:baselines}: Navier Stokes Approximation:} 
We generate paired data from the 2D incompressible Navier--Stokes equations in vorticity form on the periodic domain $[0,1]^2$, discretized on a $p\times p$ grid with $p=64$. For each split (training, calibration, and test) we generate $n=500$ independent realizations under identical settings using different random seeds. For each sample $i$, we draw a smooth initial vorticity field $\omega_{0,i}$ by spectrally filtering white noise with a hard cutoff $k_{\max}=20$ and additional exponential smoothing, and we draw a band--limited random forcing field $f_i$ supported on Fourier modes with $4\le \lVert k\rVert\le 8$, which is held constant in time over the trajectory. The viscosity $\nu_i$ is sampled independently from a log--uniform prior on $[10^{-4},\,5\times10^{-3}]$ and is provided to the model as a constant spatial field. The input is thus
\[
X_i(u)=\big(\omega_{0,i}(u),\,\nu_i\big),
\]
and the output is the terminal vorticity field
\[
Y_i(u)=\omega_i(T,u),
\]
where $\omega_i$ evolves according to
\[
\partial_t\omega + u\cdot\nabla\omega = \nu\,\Delta\omega + f,
\qquad
u=(\partial_y\psi,-\partial_x\psi),\ \ \Delta\psi=\omega.
\]
Time integration is performed using a pseudospectral scheme with FFT-based derivatives, $2/3$ dealiasing of the nonlinear term, and fourth-order Runge--Kutta time stepping with $\Delta t=10^{-3}$ for $2000$ steps.

\paragraph{Table \ref{tab:baselines}: Precipitation Downscaling:} 
We use gridded total precipitation fields (ERA5 reanalysis) $y_t$ (\texttt{tp}) on a latitude--longitude grid of size $181\times 301$ over the domain $\text{lat}\in[15,60]$ and $\text{lon}\in[-135,-60]$, with $1032$ monthly snapshots from 1940--01 to 2025--12. We form a single-channel field and convert to millimeters via $y_t \leftarrow 1000\cdot y_t$, then apply a pointwise \texttt{softplus} transform to stabilize near-zero values. To define a downscaling task, we first resample each high-resolution field to an intermediate grid $64\times 128$ using nearest-neighbor interpolation,
\[
y_t \;\leftarrow\; \mathrm{resize}_{\mathrm{nn}}\!\big(y_t,\;64\times128\big),
\]
and then construct the low-resolution input by further nearest-neighbor downsampling to $8\times 16$,
\[
x_t \;=\; \mathrm{resize}_{\mathrm{nn}}\!\big(y_t,\;8\times16\big).
\]
Finally, we normalize each pair using statistics of the corresponding low-resolution field: we subtract the spatial mean of $x_t$ from both $x_t$ and $y_t$, and divide both by the spatial standard deviation of $x_t$ (with broadcasting over the grid). We use $n_{\text{train}}=600$ and $n_{\text{cal}}=300$ time points for training and calibration respectively, with the remaining snapshots held out for testing.

\paragraph{Table \ref{tab:baselines}: Climate Debiasing:} For the climate debiasing task, we use 900 global 2m surface temperature fields (1940-2014) from the CESM climate model \citep{hurrell2013community} on a grid size of $64 \times 128$ as inputs, and target global 2m surface temperature fields from ERA5 reanalysis on the same grid \citep{hersbach2020era5}. We take $n_{\text{train}}=600$ and $n_{\text{cal}}=200$ time points for training and calibration respectively, with the remaining snapshots held out for testing. Each data product (CESM and ERA5) is individually anomalized into its forced response by subtracting off the training monthly averages and then further divided by the training global spatial standard deviation to ensure roughly unit scale. 

\paragraph{Figure \ref{fig:hurr}: Hurricane Trajectory Forecasting:}
We use the HURDAT2 Atlantic basin best-track dataset from the National Hurricane Center (NHC), covering 1851--2025 and recording 6-hourly observations of latitude, longitude, maximum sustained wind speed, minimum sea-level pressure (SLP), storm status, and (for modern storms) 34/50/64-knot wind radii in four quadrants \citep{landsea2013atlantic, harris2021elastic}. Each timestep is encoded as a 24-dimensional feature vector comprising these quantities together with missingness flags for wind, SLP, and each radii threshold.

We retain only storms from 2004 onward to ensure reliable auxiliary observations, requiring at least $95\%$ non-missing SLP entries and at least some 34-knot wind radii data per storm. From each qualifying storm we extract sliding windows with a context length of $L_c=24$ steps (144\,h) and a prediction horizon of $L_p=12$ steps (72\,h), using a stride of $6$ steps and discarding storms with fewer than $48$ total observations. Splitting is performed at the storm level to prevent window-level leakage: storms are randomly assigned (seed~$=0$) to 80/10/10 train/calibration/test splits.

For preprocessing, observed positions are converted to local equirectangular $(x,y)$ coordinates in kilometers, centered at the last context point. The input sequence $x_{\mathrm{ctx}} \in \mathbb{R}^{L_c \times F_{\mathrm{in}}}$ consists of the local $(x,y)$ positions concatenated with auxiliary covariates: wind speed, SLP, and their missingness flags ($F_{\mathrm{in}} = 6$). Both inputs and outputs are standardized channelwise using means and standard deviations computed on the training set.

\subsection{Models} \label{apx:sec:models}

\paragraph{Multilayer Perceptron:}
For vector-valued response experiments, we use a simple multilayer perceptron with two hidden layers of width 64 and ReLU activations, mapping $\mathbb{R}^d \to \mathbb{R}^d$. The model is trained with a mean squared error loss using the Adam optimizer and serves as a low-capacity baseline for non-operator regression settings.

\paragraph{2D Fourier Neural Operator:} 

All operator-learning experiments use a common 2D Fourier Neural Operator (FNO) trunk, adapted from the NeuralOperator architecture. Given an input field $x\in\mathbb{R}^{H\times W\times C_{\mathrm{in}}}$, the trunk first applies a $1\times 1$ lifting convolution to map inputs into a latent width-$64$ channel space. The lifted representation is then processed by a stack of $4$ FNO blocks. Each block combines (i) a spectral convolution that applies learned complex-valued linear transformations to a fixed band of low-frequency Fourier modes (using a real FFT in space), (ii) a $1\times 1$ pointwise skip connection in physical space, and (iii) a channel-wise MLP implemented via $1\times 1$ convolutions. Residual connections and GELU nonlinearities are used throughout, with optional dropout for regularization.

\paragraph{Deterministic FNO head.}
The CPD base model uses the shared 2D FNO trunk followed by a single $1\times1$ convolutional head that linearly projects latent features to the output channels. The model is trained end-to-end with a mean squared error loss with the Adam optimizer. This deterministic predictor serves as the point forecast for all CPD variants.

\paragraph{MC Dropout FNO head.}
The MC Dropout model uses the same 2D FNO trunk and deterministic $1\times1$ convolutional head as the base CPD model, but includes dropout ($p = 0.1$) layers within the trunk. The model is trained identically using a mean squared error loss with the Adam optimizer. At inference time, dropout is kept active and multiple stochastic forward passes are used to generate samples, yielding an approximate predictive distribution via Monte Carlo dropout.

\paragraph{Deep ensemble FNO.}
The deep ensemble model consists of $M = 5$ independently initialized deterministic FNOs, all sharing the same trunk and head architecture. Each ensemble member is trained separately using the same mean squared error objective and optimizer settings. At inference time, the ensemble produces multiple predictions by evaluating all members and treating their outputs as samples from the predictive distribution, capturing uncertainty through model diversity.

\paragraph{IQN (quantile function) FNO head.}
The IQN model replaces the deterministic head with a quantile-function head that represents the conditional quantile $Q_x(\tau)$ for $\tau\in(0,1)$. Given trunk features $h\in\mathbb{R}^{H\times W\times \text{width}}$ and a quantile level $\tau$, a sinusoidal embedding of $\tau$ is mapped through a linear layer and used to modulate the features in a FiLM-style manner. A final $1\times1$ convolution then produces the corresponding quantile field $q_\tau(x)\in\mathbb{R}^{H\times W\times C_{\text{out}}}$.

Training proceeds by sampling $\tau\sim\mathrm{Unif}(0,1)$ independently for each example and minimizing the pinball (quantile) loss. At inference time, samples from the predictive distribution are obtained by drawing $\tau$ values and evaluating the quantile function, yielding an implicit inverse-CDF sampler.

\paragraph{Mean–variance FNO head.}
The mean–variance model augments the shared 2D FNO trunk with a probabilistic head that outputs a per-pixel Gaussian distribution. A single $1\times1$ convolution maps trunk features to a mean field $\mu(x)$ and a log–standard deviation field $\log\sigma(x)$, assuming diagonal covariance across channels and spatial locations. The model is trained by minimizing the negative log-likelihood of a multivariate Gaussian, and predictive samples are obtained by drawing Gaussian noise scaled by the learned $\sigma(x)$ around the mean.

\paragraph{Mixture density network (MDN) FNO head.}
The MDN model equips the shared 2D FNO trunk with a mixture density head that predicts a per-pixel mixture of $K=5$ diagonal Gaussians. A single $1\times1$ convolution outputs mixture weights, component means, and log–standard deviations for each spatial location and channel. The model is trained by minimizing the negative log-likelihood under the resulting mixture distribution. At inference time, predictive samples are obtained by sampling a mixture component and then drawing from the corresponding Gaussian, allowing the model to represent multi-modal predictive uncertainty.

\paragraph{Conditional Flow Matching (CFM) FNO.}
The conditional flow matching model augments the shared 2D FNO trunk with a velocity-field head and learns a continuous-time generative model for $p(y\mid x)$. Training follows a rectified (OT-style) flow matching scheme: a base sample $y_0$ is drawn from a smooth Gaussian random field, a target $y_1=y$ is drawn from data, and intermediate states are formed by the straight-line bridge $y_t=(1-t)y_0+ty_1$ with $t\sim\mathrm{Unif}(0,1)$. The velocity network $v_\theta(x,y_t,t)$ is trained by mean squared error to match the constant teacher velocity $v^\ast=y_1-y_0$. 

At inference time, samples are generated by integrating the learned velocity field from $t=0$ to $t=1$, starting from $y_0$, using forward Euler with 16 steps. This yields conditional samples without specifying an explicit likelihood, serving as a flexible generative baseline for CPD comparisons.

\paragraph{1D CNN (Trajectories):} 
We train a deterministic 1D convolutional point predictor that maps a fixed-length context window to a fixed-length future trajectory segment in a local Cartesian frame. Each context sequence is first recentered at the last observed location and converted from latitude--longitude to local equirectangular $(x,y)$ coordinates in kilometers. The input sequence $x_{\mathrm{ctx}}\in\mathbb{R}^{L_c\times F_{\mathrm{in}}}$ consists of the local $(x,y)$ positions concatenated with auxiliary covariates (wind, sea-level pressure, and missingness flags). Both inputs and outputs are standardized channelwise using means and standard deviations computed on the training set.

The network is a residual 1D CNN: a $1\times 1$ ``lift'' convolution maps $F_{\mathrm{in}}\!\to\!64$ channels, followed by $4$ residual blocks, each containing two same-padded temporal convolutions with kernel size $k=5$ and GELU nonlinearity. The resulting hidden sequence is aggregated by masked average pooling over time to a fixed-width representation in $\mathbb{R}^{64}$, which is mapped by a final linear layer to $2L_p$ outputs and reshaped to $\hat y\in\mathbb{R}^{L_p\times 2}$ (future local $(x,y)$ coordinates).

Training uses AdamW (learning rate $3\times 10^{-4}$, weight decay $10^{-4}$) for 2000 gradient steps with batch size 128. The loss is a masked trajectory loss that combines position error with a derivative penalty (velocity mismatch) to encourage realistic motion; in our runs we set the velocity weight to $w_{\mathrm{vel}}=2.0$ (and include an additional curvature term as implemented in the shared \texttt{traj\_loss}). At inference time, predicted local $(x,y)$ trajectories are converted back to latitude--longitude using the inverse local projection centered at the last context point.

\subsection{Vector Nonconformity Scores}  \label{apx:sec:vec_scores}

We consider a collection of nonconformity scores ranging from simple norm-based losses to distributional and data-adaptive scores. All scores are computed pointwise in the output space and admit gradients with respect to the prediction, enabling their use within the nonconformity flow framework.

\paragraph{$\ell_2$ score.}
The $\ell_2$ score is defined as the root mean squared error between the prediction $\hat y$ and the target $y$,
\[
s_{\ell_2}(y,\hat y) = \Big( \tfrac{1}{D} \|y-\hat y\|_2^2 \Big)^{1/2},
\]
where $D$ denotes the output dimension. This score induces spherical conformal level sets and yields smooth, globally Lipschitz gradients. It serves as a baseline score in all experiments and corresponds to the geometry most commonly used in regression-based conformal prediction.

\paragraph{$\ell_1$ score.}
The $\ell_1$ score is defined as the mean absolute deviation,
\[
s_{\ell_1}(y,\hat y) = \tfrac{1}{D} \|y-\hat y\|_1.
\]
This score is more robust to outliers than $\ell_2$ but is non-differentiable along coordinate hyperplanes. In practice, this lack of smoothness leads to unstable or stalled flow dynamics, which we observe empirically in the convergence experiments.

\paragraph{Huber score.}
The Huber score interpolates between quadratic and linear penalties,
\[
s_{\mathrm{Huber}}(y,\hat y) =
\frac{1}{D}\sum_{j=1}^D
\begin{cases}
\tfrac{1}{2} r_j^2, & |r_j| < \delta, \\
\delta(|r_j| - \tfrac{1}{2}\delta), & |r_j| \ge \delta,
\end{cases}
\quad r = y-\hat y,
\]
with threshold $\delta>0$. This score retains smoothness near the prediction while reducing sensitivity to large deviations, yielding more stable flows than $\ell_1$ while remaining more robust than $\ell_2$.

\paragraph{$k$NN residual score.}
The $k$NN score measures how similar the prediction residual is to residuals observed on the calibration set. Let $\{r_i\}_{i=1}^n$ denote calibration residuals. The score is defined as the average squared distance to the $k$ nearest residuals,
\[
s_{\mathrm{kNN}}(y,\hat y)
= \frac{1}{k} \sum_{i \in \mathcal{N}_k(r)} \| r - r_i \|_2^2,
\quad r = y-\hat y.
\]
This score adapts to the empirical residual geometry and can capture non-elliptical uncertainty. However, it is only piecewise smooth and induces discontinuous gradients, which again leads to poor convergence behavior in the flow.

\paragraph{Gaussian likelihood score.}
We also consider a parametric Gaussian residual score obtained by fitting a diagonal Gaussian model to calibration residuals,
\[
r = y-\hat y \sim \mathcal{N}(\mu, \Sigma),
\]
and defining the score as the negative log-likelihood,
\[
s_{\mathcal{N}}(y,\hat y) = -\log p(r).
\]
This score yields ellipsoidal conformal sets aligned with the empirical covariance structure and induces smooth, globally well-behaved flows. It corresponds to a Mahalanobis-type geometry in the output space.

\paragraph{Student-$t$ likelihood score.}
Finally, we consider a multivariate Student-$t$ residual model with $\nu$ degrees of freedom,
\[
r \sim t_\nu(\mu,\Sigma),
\]
and define the score as the negative log-density. Compared to the Gaussian score, this induces heavier-tailed conformal sets and improves robustness to extreme residuals. The resulting score remains smooth and differentiable while allowing greater flexibility in the induced uncertainty geometry.

\subsection{2D Operator Nonconformity Scores} \label{apx:sec:2d_scores}

For 2D operator problems, outputs are tensor-valued fields $y,\hat y \in \mathbb{R}^{H\times W\times C}$ (typically $C=1$). We evaluate conformity using scores that probe complementary notions of fidelity: pointwise amplitude error, Sobolev-type smoothness error, spectral-shape error, and multiscale wavelet details error. For each score $S(y,\hat y)$, the split-conformal threshold $\tau_\alpha$ is computed as the empirical $(1-\alpha)$ quantile of calibration scores, and gradients are taken with respect to $\hat y$ for use in the nonconformity flow.

\paragraph{$\ell_2$ field score (CPD-G).}
We use the global root-mean-square error
\[
s_{\ell_2}(y,\hat y)
=
\Big( \tfrac{1}{HWC}\sum_{i=1}^H\sum_{j=1}^W\sum_{c=1}^C (y_{ijc}-\hat y_{ijc})^2 \Big)^{1/2}.
\]
This score measures overall amplitude agreement in pixel space and induces spherical level sets in the ambient Euclidean geometry of the discretized field.

\paragraph{Sobolev score.}
To penalize both pointwise error and mismatch in local spatial variation, we use an $H^1$-like discrepancy based on first-order finite differences of the residual field $e = y-\hat y$:
\[
s_{\mathrm{Sob}}(y,\hat y)
=
\Big(\mathbb{E}[e^2]
+
\lambda \big(\mathbb{E}[(\Delta_x e)^2]+\mathbb{E}[(\Delta_y e)^2]\big)
\Big)^{1/2},
\qquad \lambda=1,
\]
where expectations denote spatial-channel averages and $\Delta_x,\Delta_y$ are periodic forward differences implemented by one-step rolls along width and height. This score discourages over-smoothing and small-scale artifacts by explicitly controlling gradient energy.

\paragraph{Spectral density score (PSD distance).}
To compare the distribution of energy across spatial frequencies, we compute a power spectral density (PSD) discrepancy. Let $\mathcal{F}$ denote the 2D real FFT over spatial axes and let
\[
P_y(\omega) = \sum_{c=1}^C \big|\mathcal{F}(y)_c(\omega)\big|^2,
\qquad
P_{\hat y}(\omega) = \sum_{c=1}^C \big|\mathcal{F}(\hat y)_c(\omega)\big|^2,
\]
defined on the discrete frequency grid $\omega=(\omega_x,\omega_y)$. The implemented score is an unnormalized $\ell_1$ distance in PSD space,
\[
s_{\mathrm{PSD}}(y,\hat y) = \sum_{\omega} \big|P_{\hat y}(\omega)-P_y(\omega)\big|.
\]
In our experiments we set $K=12$ but operate in the continuous PSD mode (no band-aggregation) and we compare linear power (no log transform). This score is sensitive to spectral misallocation such as excess high-frequency noise or over-suppression of fine scales.

\paragraph{Wavelet multiscale detail score.}
To measure multiresolution discrepancies in local structure, we compute a wavelet-domain score on the residual $e=\hat y-y$ using a 2D Daubechies wavelet (db2) decomposition to depth $J=3$. Let $\{(cH_j,cV_j,cD_j)\}_{j=1}^J$ denote horizontal/vertical/diagonal detail coefficients at scale $j$ for each channel. The score averages the RMS magnitude of detail coefficients across orientations and sums across scales and channels:
\[
s_{\mathrm{wav}}(y,\hat y)
=
\sum_{c=1}^C \sum_{j=1}^J
\frac{1}{3}\Big(
\|cH_{j,c}\|_{\mathrm{RMS}}+\|cV_{j,c}\|_{\mathrm{RMS}}+\|cD_{j,c}\|_{\mathrm{RMS}}
\Big),
\qquad (J=3,\ \text{db2}).
\]
This score emphasizes mismatches in localized features (edges, filaments, texture) across scales, complementing purely spectral or pixelwise norms.

\paragraph{Composite score (max of normalized Sobolev and PSD).}
Finally, we use a composite score that enforces agreement under both a Sobolev-type notion of smoothness and a spectral-shape notion of frequency allocation. Let $s_{\mathrm{Sob}}$ and $s_{\mathrm{PSD}}$ denote the two base scores above. We normalize each component by a calibration-derived scale $\kappa_i$, taken as the median of that score over the calibration set. The composite score is
\[
s_{\mathrm{combo}}(y,\hat y)
=
\max\!\left\{
\frac{s_{\mathrm{Sob}}(y,\hat y)}{\kappa_{\mathrm{Sob}}+\varepsilon},
\frac{s_{\mathrm{PSD}}(y,\hat y)}{\kappa_{\mathrm{PSD}}+\varepsilon}
\right\},
\]
with $\varepsilon>0$ a small numerical constant. Using the max aggregator makes the induced conformal set the intersection of the two (scaled) sublevel constraints, ensuring that samples are simultaneously consistent in both spatial smoothness and spectral energy distribution. This is the primary multi-criterion score used in our 2D operator experiments. Note that the max operation is non-differentiable on the set where both arguments are equal, but this set has measure zero and does not cause convergence issues in practice.

\paragraph{Localized combined score (local $\ell_2$ + log-spectral)  (CPD-L)}
For 2D field outputs $y,\hat y \in \mathbb{R}^{H\times W\times C}$ we also use a \emph{localized} score that combines (i) pixel-space RMSE and (ii) a log-power-spectral discrepancy. Define two base terms
\[
t_{\ell_2}(y,\hat y)
:= \Big(\tfrac{1}{HWC}\|y-\hat y\|_2^2 + \varepsilon\Big)^{1/2},
\qquad
t_{\mathrm{spec}}(y,\hat y)
:= \Big(d_{\mathrm{spec}}(y,\hat y) + \varepsilon\Big)^{1/2},
\]
where $d_{\mathrm{spec}}$ is the mean (over channels) squared error between \emph{log power spectra}. Concretely, for each channel $c$ we compute
\[
S_y^{(c)} := \log\!\big(1 + |\mathcal{F}(y^{(c)}-\overline{y^{(c)}})|^2 + \epsilon_{\mathrm{spec}}\big),\quad
S_{\hat y}^{(c)} := \log\!\big(1 + |\mathcal{F}(\hat y^{(c)}-\overline{\hat y^{(c)}})|^2 + \epsilon_{\mathrm{spec}}\big),
\]
using a 2D real FFT $\mathcal{F}$ over spatial axes and mean-removal $\overline{(\cdot)}$, and set
\[
d_{\mathrm{spec}}(y,\hat y) := \frac{1}{C}\sum_{c=1}^C \mathbb{E}\big[(S_y^{(c)}-S_{\hat y}^{(c)})^2\big].
\]
To prevent either term from dominating, each term is normalized by a robust calibration scale. Let $\{(y_i,\hat y_i)\}_{i=1}^n$ denote calibration pairs and define
\[
\kappa_{\ell_2} := \mathrm{MAD}\big(\{t_{\ell_2}(y_i,\hat y_i)\}_{i=1}^n\big),\qquad
\kappa_{\mathrm{spec}} := \mathrm{MAD}\big(\{t_{\mathrm{spec}}(y_i,\hat y_i)\}_{i=1}^n\big),
\]
where $\mathrm{MAD}(u)=\mathrm{median}(|u-\mathrm{median}(u)|)$ (plus a small numerical jitter). The combined score is then a weighted RMS aggregation,
\[
s_{\mathrm{loc}}(y,\hat y)
:=
\left(
\frac{
w_{\ell_2}\big(t_{\ell_2}(y,\hat y)/\kappa_{\ell_2}\big)^2
+
w_{\mathrm{spec}}\big(t_{\mathrm{spec}}(y,\hat y)/\kappa_{\mathrm{spec}}\big)^2
}{
w_{\ell_2}+w_{\mathrm{spec}}+\varepsilon
}
\right)^{1/2},
\]
with fixed weights $w_{\ell_2}= 10, w_{\mathrm{spec}}=1$ in our experiments.

To obtain \emph{local} conformal thresholds $\tau_\alpha(x)$, we replace the global quantile of calibration scores by a weighted quantile that depends on the input field $x$. We compute a feature map $\phi(x)\in\mathbb{R}^d$ consisting of per-channel mean and standard deviation of $x$ together with a low-resolution pooled representation obtained by resizing $x$ to an $8\times 8$ grid and vectorizing. Let $\{\phi(x_i)\}_{i=1}^n$ be calibration features and standardize them coordinate-wise. For a test input $x_\star$, we form RBF weights
\[
w_i(x_\star) \propto \exp\!\left(-\frac{\|\tilde\phi(x_i)-\tilde\phi(x_\star)\|_2^2}{2h^2}\right),\qquad \sum_{i=1}^n w_i(x_\star)=1,
\]
with bandwidth $h=1$ (on standardized features). Given calibration scores $s_i := s_{\mathrm{loc}}(y_i,\hat y_i)$, the localized threshold is defined as the weighted $(1-\alpha)$ quantile
\[
\tau_\alpha(x_\star) := Q_{1-\alpha}\big(\{s_i\}_{i=1}^n;\{w_i(x_\star)\}_{i=1}^n\big),
\]
implemented by sorting scores and accumulating the normalized weights until the desired mass $1-\alpha$ is reached. This construction concentrates conformity assessment on calibration examples whose inputs are similar to $x_\star$, yielding locally adaptive conformal level sets.

\subsection{Trajectory nonconformity scores} \label{apx:sec:traj_scores}

For hurricane trajectory forecasting, each output is a sequence of planar coordinates
$y,\hat y \in \mathbb{R}^{T\times 2}$ (latitude--longitude projected to a local $(x,y)$ coordinate system for geometry computations). We consider two nonconformity scores: a baseline $\ell_2$ trajectory error and a richer \emph{conditional geometric trajectory} (CGT) score that penalizes mismatches in trajectory shape and dynamics.

\paragraph{$\ell_2$ trajectory score.}
The baseline trajectory score is the global RMS position error,
\[
s_{\ell_2}(y,\hat y)
=
\Big(\tfrac{1}{2T}\sum_{t=1}^T \|y_t-\hat y_t\|_2^2\Big)^{1/2}.
\]
This score is smooth and measures pointwise agreement in position, but it does not directly constrain physically meaningful attributes such as velocity, turning, or total path length.

\paragraph{Conditional geometric trajectory (CGT) score.}
The CGT score is a weighted combination of multiple geometry terms computed from the trajectory and its discrete derivatives. Let first and second differences be
\[
\Delta y_t := y_{t+1}-y_t \in \mathbb{R}^2,\qquad
\Delta^2 y_t := y_{t+2}-2y_{t+1}+y_t \in \mathbb{R}^2,
\]
and define the path length
\[
L(y) := \sum_{t=1}^{T-1}\|\Delta y_t\|_2.
\]
We also define a turning-angle sequence from consecutive velocity vectors. For $v_t=\Delta y_t$, let
\[
\theta_t(y) := \arccos\!\left(
\frac{\langle v_t,v_{t+1}\rangle}{\|v_t\|_2\|v_{t+1}\|_2+\varepsilon}
\right),
\qquad t=1,\dots,T-2,
\]
with clipping inside $\arccos(\cdot)$ for numerical stability. Using these primitives, CGT computes the following RMS-style discrepancy terms between $y$ and $\hat y$:
\[
\begin{aligned}
t_{\mathrm{pos}}(y,\hat y) &:= \Big(\tfrac{1}{2T}\sum_{t=1}^T \|y_t-\hat y_t\|_2^2 + \varepsilon\Big)^{1/2},\\
t_{\mathrm{vel}}(y,\hat y) &:= \Big(\tfrac{1}{2(T-1)}\sum_{t=1}^{T-1}\|\Delta y_t-\Delta \hat y_t\|_2^2 + \varepsilon\Big)^{1/2},\\
t_{\mathrm{curv}}(y,\hat y) &:= \Big(\tfrac{1}{2(T-2)}\sum_{t=1}^{T-2}\|\Delta^2 y_t-\Delta^2 \hat y_t\|_2^2 + \varepsilon\Big)^{1/2},\\
t_{\mathrm{speed}}(y,\hat y) &:= \Big(\tfrac{1}{T-1}\sum_{t=1}^{T-1}(\|\Delta y_t\|_2-\|\Delta \hat y_t\|_2)^2 + \varepsilon\Big)^{1/2},\\
t_{\mathrm{turn}}(y,\hat y) &:= \Big(\tfrac{1}{T-2}\sum_{t=1}^{T-2}(\theta_t(y)-\theta_t(\hat y))^2 + \varepsilon\Big)^{1/2},\\
t_{\mathrm{len}}(y,\hat y) &:= |L(y)-L(\hat y)|.
\end{aligned}
\]
To ensure that no single component dominates due to scale, each term is normalized by a robust calibration scale, computed as a median absolute deviation (MAD) over calibration pairs $\{(y_i,\hat y_i)\}_{i=1}^n$:
\[
\kappa_k := \mathrm{median}\big(|t_k(y_i,\hat y_i)-\mathrm{median}(t_k)|\big)+\varepsilon,\qquad
k\in\{\mathrm{pos},\mathrm{vel},\mathrm{curv},\mathrm{speed},\mathrm{turn},\mathrm{len}\}.
\]
The final CGT score is a weighted RMS aggregation of the normalized terms,
\[
s_{\mathrm{CGT}}(y,\hat y)
=
\left(
\frac{\sum_k w_k\,(t_k(y,\hat y)/\kappa_k)^2}{\sum_k w_k + \varepsilon}
\right)^{1/2},
\]
with equal weights $w_k\equiv 1$ unless otherwise stated. This score remains ``distance-to-$\hat y$'' in spirit but enforces substantially stronger geometric realism by penalizing mismatches in velocity, curvature, turning behavior, and total traveled distance.

\paragraph{Optional geometry-only localization (weighted quantile thresholds).}
In addition to global thresholds, we optionally form input-dependent thresholds by weighting calibration examples according to similarity of the \emph{predicted} future trajectory geometry. Specifically, we extract a low-dimensional feature vector $\phi(\hat y)\in\mathbb{R}^4$ from the predicted future segment,
\[
\phi(\hat y)=\big(\mathrm{mean\_speed},\,\mathrm{std\_speed},\,\mathrm{mean\_curvature},\,\mathrm{mean\_turn}\big),
\]
standardize features across calibration predictions, and define RBF weights
\[
w_i(\hat y_\star)\propto \exp\!\left(-\frac{\|\tilde\phi(\hat y_i)-\tilde\phi(\hat y_\star)\|_2^2}{2h^2}\right),\qquad \sum_i w_i(\hat y_\star)=1.
\]
Given calibration scores $s_i=S(y_i,\hat y_i)$, we set the localized threshold as the weighted $(1-\alpha)$ quantile
\[
\tau_\alpha(\hat y_\star) := Q_{1-\alpha}\big(\{s_i\}_{i=1}^n;\{w_i(\hat y_\star)\}_{i=1}^n\big),
\]
implemented by sorting scores and accumulating weights. This localization uses only trajectory geometry (no wind or pressure covariates) and is intended to mitigate regime heterogeneity by calibrating within geometrically similar forecast patterns.

\subsection{Evaluation Metrics} \label{apx:sec:metrics}

We evaluate predictive distributions using three complementary metrics: Energy Distance (ED), Log Spectral Distance (LSD), and a local patch-based Maximum Mean Discrepancy (MMD). Together, these metrics assess global distributional accuracy, spectral fidelity, and fine-scale spatial structure.

\paragraph{Energy Distance (ED).}
The Energy Distance is a strictly proper scoring rule that measures the discrepancy between two distributions in Euclidean space. Given samples $\{y^{(k)}\}_{k=1}^K \subset \mathbb{R}^D$ from a predictive distribution and a realized target $y \in \mathbb{R}^D$, we compute
\[
\mathrm{ED} = \frac{1}{K} \sum_{k=1}^K \| y^{(k)} - y \|_2
\;-\;
\frac{1}{2K^2} \sum_{k,k'} \| y^{(k)} - y^{(k')} \|_2,
\]
where distances are implemented as root-mean-square (RMS) norms to avoid trivial scaling with dimension. ED captures overall distributional alignment and penalizes both bias and lack of diversity. Lower values indicate better agreement between the predictive distribution and the data-generating process.

\paragraph{Log Spectral Distance (LSD).}
To assess whether predictive samples reproduce the correct allocation of energy across spatial frequency scales, we compute a Log Spectral Distance based on Fourier power spectra. For each field $u$, we compute the centered 2D discrete Fourier transform and its power spectrum $S(u) = |\mathcal{F}(u)|^2$. Let $\beta$ denote the mean spectral power of the target field. We define the log-amplitude spectrum
\[
\tilde{S}(u) = \log\!\left(1 + \frac{S(u)}{\beta}\right),
\]
and compute the LSD as the mean squared error between the target spectrum and the spectra of the predictive samples. This metric is sensitive to over-smoothing, spurious high-frequency noise, and spectral misallocation, while remaining invariant to global amplitude shifts.

\paragraph{Patch-wise Maximum Mean Discrepancy (MMD).}
To measure local texture and fine-scale structural fidelity, we compute a patch-based Maximum Mean Discrepancy between the target field and the predictive samples. We extract overlapping $p \times p$ spatial patches on a regular grid and treat the resulting patch vectors as samples from local feature distributions. Patches are optionally standardized to remove absolute intensity information, focusing the comparison on relative structure and texture.

We compute the squared MMD using a Gaussian RBF kernel,
\[
\mathrm{MMD}^2(A,B) =
\mathbb{E}[k(a,a')] + \mathbb{E}[k(b,b')] - 2\mathbb{E}[k(a,b)],
\]
where $A$ denotes patches from the target field and $B$ denotes pooled patches from all predictive samples. An unbiased estimator is used by removing diagonal kernel terms. To improve robustness, we average the MMD across multiple kernel bandwidths. This metric penalizes mismatches in local spatial organization that are not detectable through global norms or spectral summaries.

\section{Qualitative Samples (Table \ref{tab:baselines})}\label{apx:sec:qualitative}
To visually demonstrate the sample quality of each method, beyond what metrics alone can show, we provide a few samples from each method under each data setting. Each column in each plot represents the output of a UQ method for the given data, except the first column which shows actual realizations from the test set. Across all data modalities, CPDs tend to produce samples that visually represent the target process, as indicated by the relatively lower LSD and MMD scores in Table \ref{tab:baselines}.

On the 2D GP prediction (Figure \ref{fig:gp_samples}) and the temperature debiasing tasks (Figure \ref{fig:temp_samples}) all models are able to reproduce the target process fairly well given their simplistic structure, with the biggest differentiations being whether the model oversmooths (D. Ens., IQN) or injects excessive high frequency noise (Dropout, MVE, MDN). On more complex tasks, such as the Navier Stokes approximation (Figure \ref{fig:ns_samples}) or precipitation downscaling (Figure \ref{fig:precip_samples}), we can see that not only do baselines grossly underestimate variability, they also do not produce accurate fine-scale structures. These examples are dominated by their structural content, rather than overall magnitude, rendering many baselines largely ineffective. The elliptic PDE inverse modeling problem (Figure \ref{fig:elliptic_samples}) presents a unique challenge in that the boundary conditions we are trying to predict are Gaussian processes, but the baselines cannot seem to capture the scale correctly.

\begin{figure}[h!]
\centering
\includegraphics[width=0.9\textwidth]{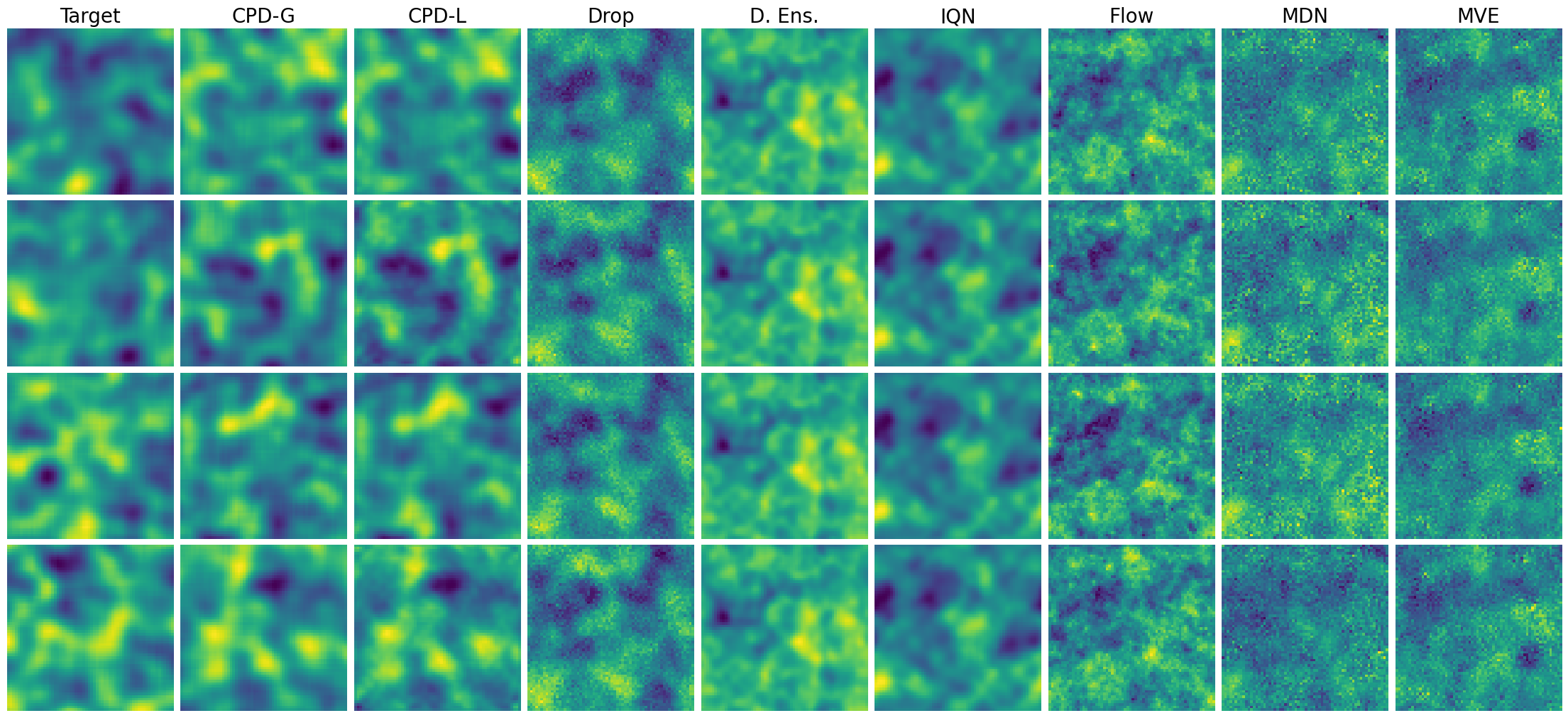}
\caption{Samples from the 2D GP regression experiment. Column one shows realizations from the response $Y$, while the remaining columns show samples from each UQ method.}
\label{fig:gp_samples}
\end{figure}

\begin{figure}[h!]
\centering
\includegraphics[width=0.9\textwidth]{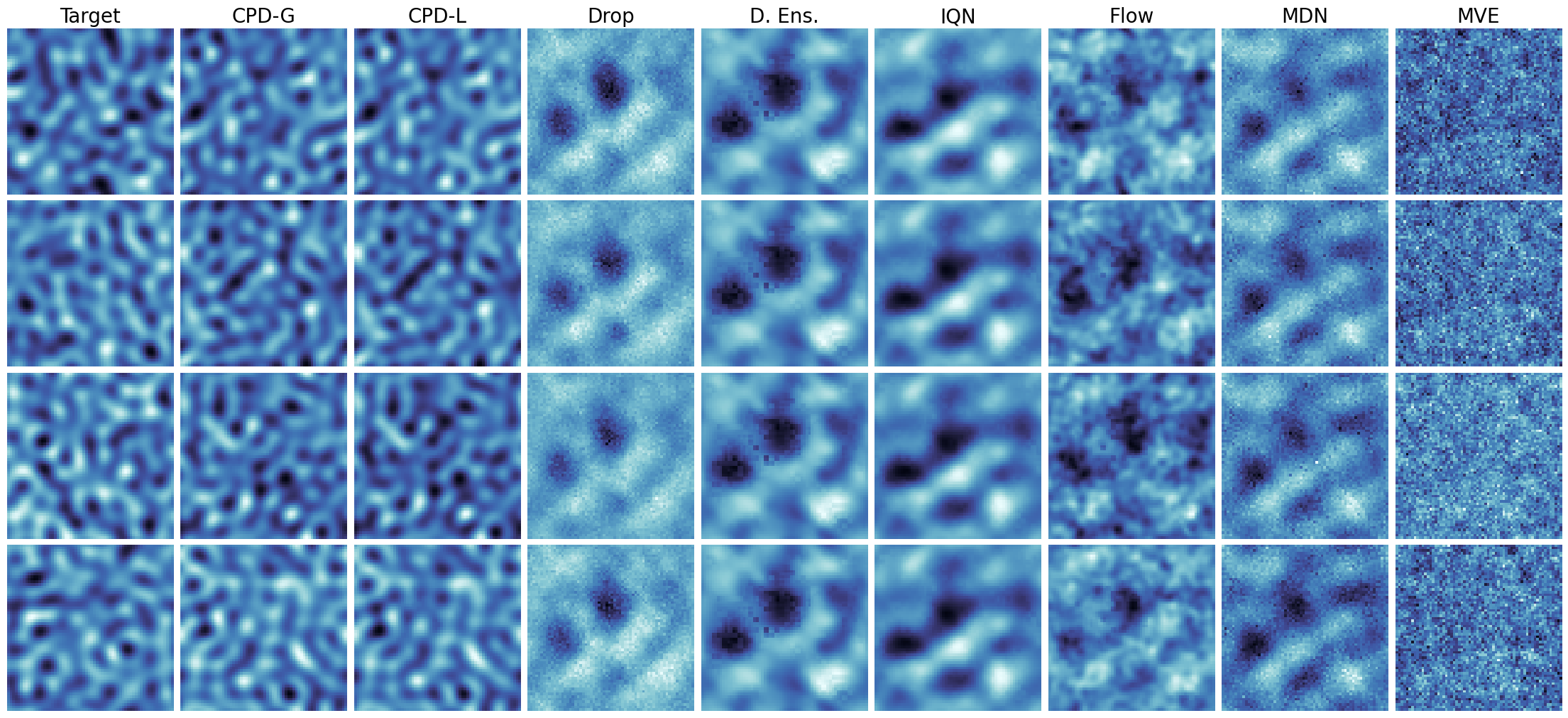}
\caption{Samples from the Elliptic PDE inversion experiment. Column one shows realizations from the response $Y$, while the remaining columns show samples from each UQ method.}
\label{fig:elliptic_samples}
\end{figure}

\begin{figure}[h!]
\centering
\includegraphics[width=0.9\textwidth]{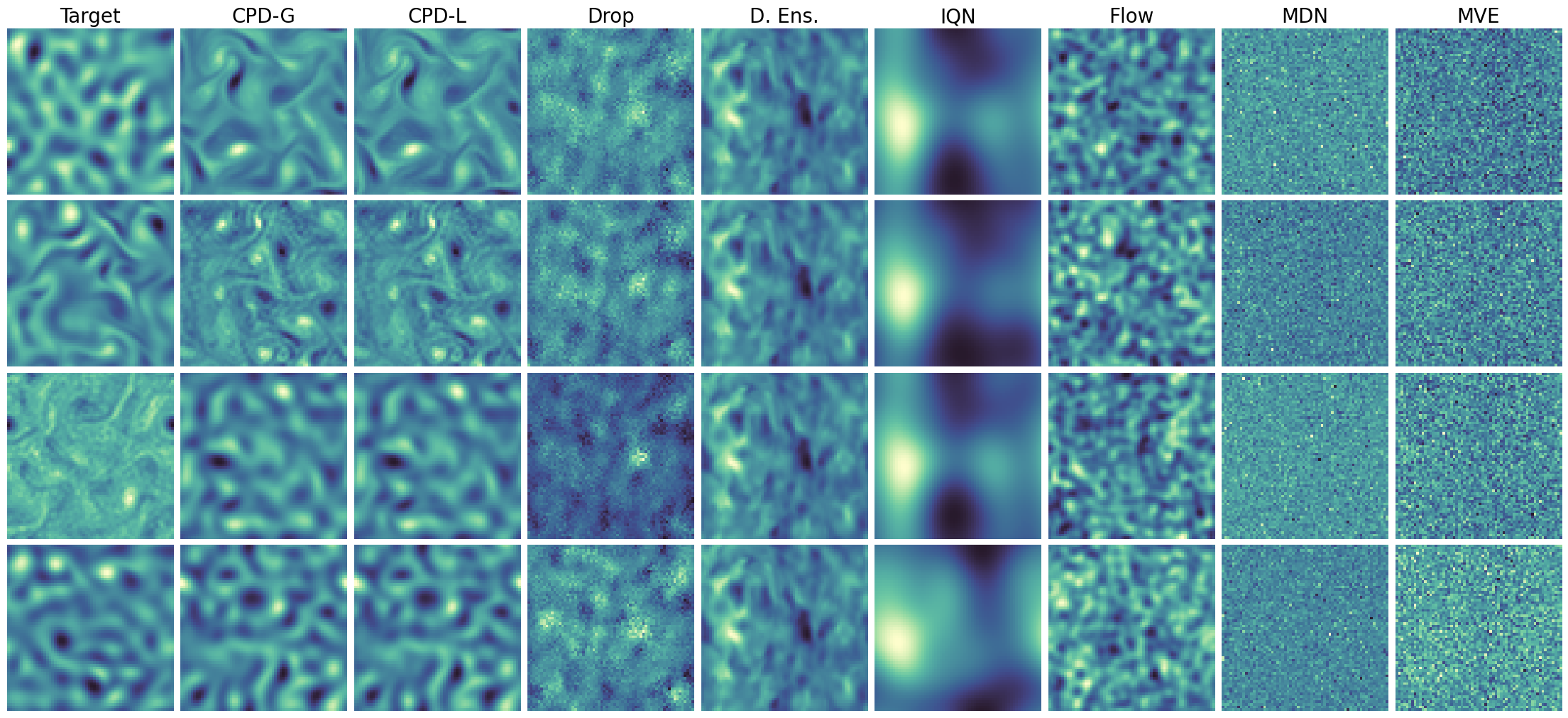}
\caption{Samples from the Navier-Stokes approximation experiment. Column one shows realizations from the response $Y$, while the remaining columns show samples from each UQ method.}
\label{fig:ns_samples}
\end{figure}

\begin{figure}[h!]
\centering
\includegraphics[width=0.9\textwidth]{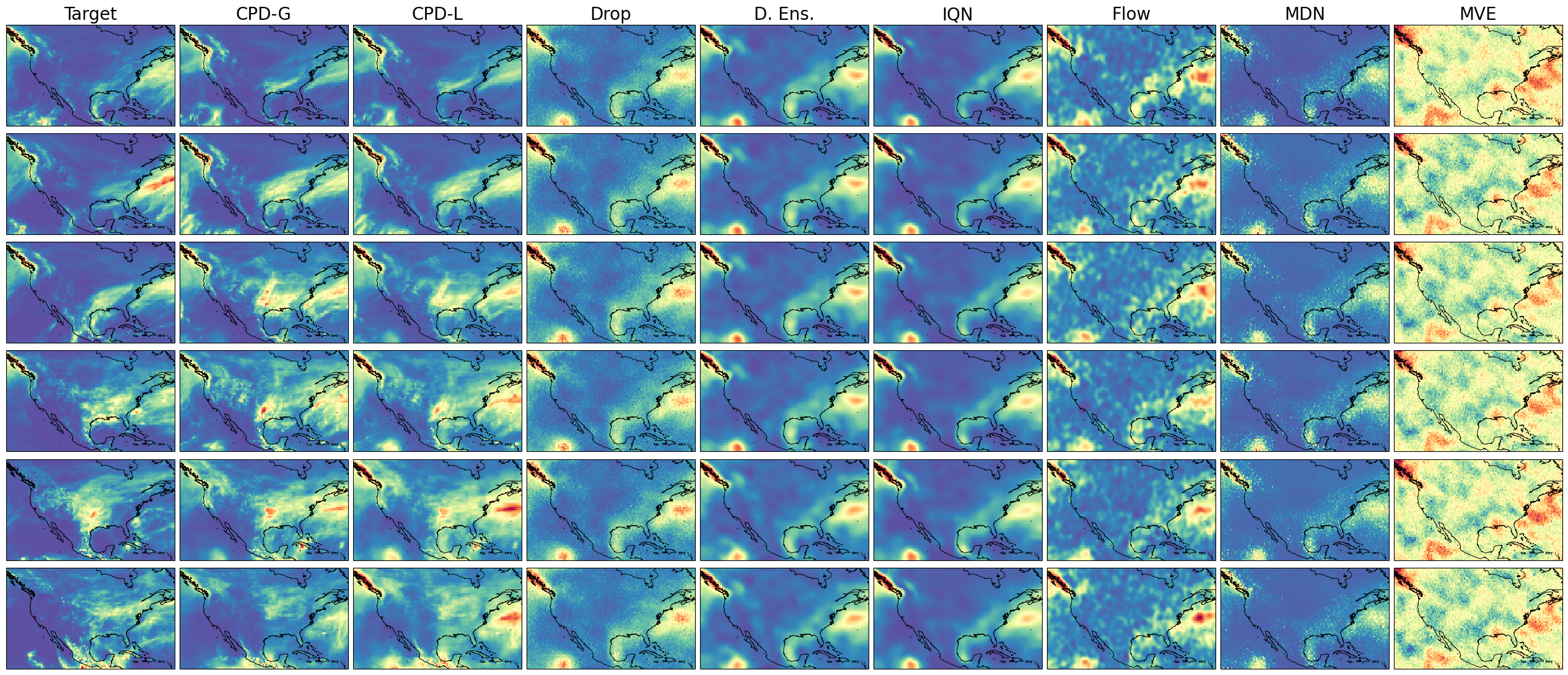}
\caption{Samples from the precipitation downscaling experiment. Column one shows realizations from the response $Y$, while the remaining columns show samples from each UQ method.}
\label{fig:precip_samples}
\end{figure}

\begin{figure}[h!]
\centering
\includegraphics[width=0.9\textwidth]{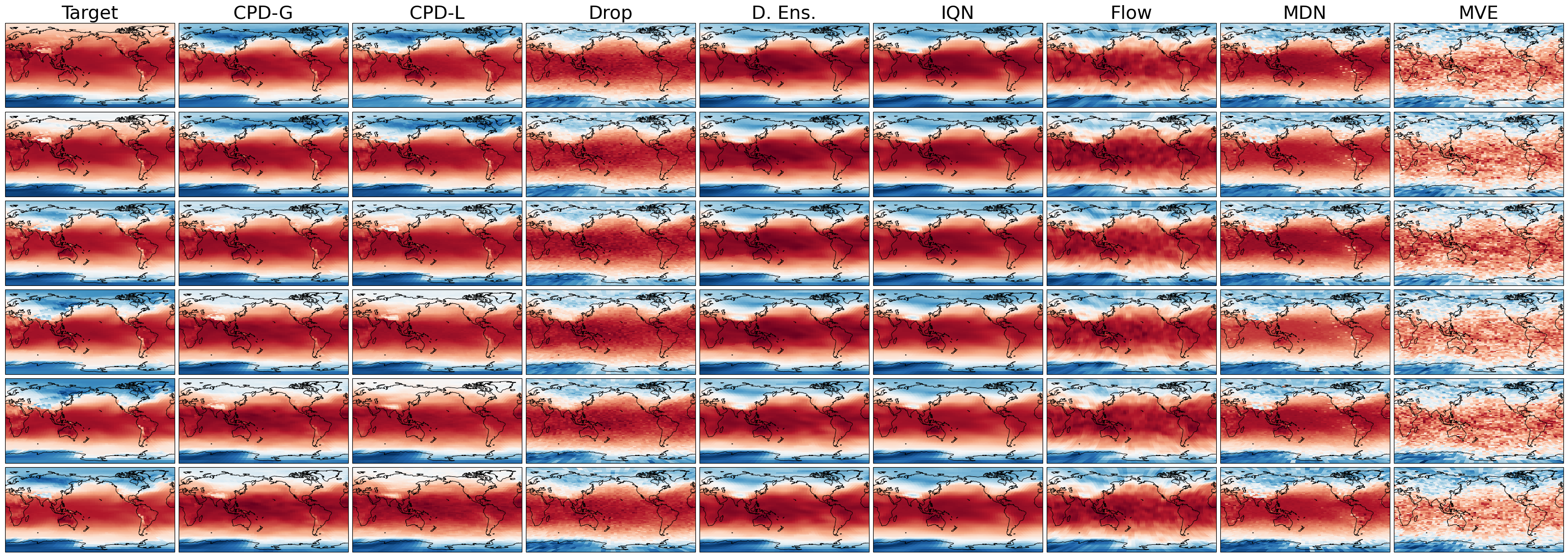}
\caption{Samples from the climate model debiasing experiment. Column one shows realizations from the response $Y$, while the remaining columns show samples from each UQ method.}
\label{fig:temp_samples}
\end{figure}


\end{document}